\begin{document}
	
	\title{Approximation of Nonlinear Functionals Using Deep ReLU Networks
	}
	
	
	\author{Linhao Song \and Jun Fan \and Di-Rong Chen \and Ding-Xuan Zhou 
	}
	
	
	\institute{Linhao Song \at
		School of Mathematical Science, Beihang University, Beijing, China
		\at School of Data Science, City University of Hong Kong, Kowloon, Hong Kong\\
		\email{linhasong2-c@my.cityu.edu.hk}           
		\and
		Jun Fan \Letter \at Department of Mathematics, Hong Kong Baptist University, Kowloon, Hong Kong\\
		\email{junfan@hkbu.edu.hk}
		\and
		Di-Rong Chen \at School of Mathematical Science, Beihang University, Beijing, China\\
		\email{drchen@buaa.edu.cn}
		\and
		Ding-Xuan Zhou \at School of Mathematics and Statistics, University of Sydney, Sydney NSW 2006, Australia\\
		\email{dingxuan.zhou@sydney.edu.au}
	}
	
	\date{Received: date / Accepted: date}

	\maketitle
	
	\begin{abstract}
		In recent years, functional neural networks have been proposed and studied in order to approximate nonlinear continuous functionals defined on $L^p([-1, 1]^s)$ for integers $s\ge1$ and $1\le p<\infty$. However, their theoretical properties are largely unknown beyond universality of approximation or the existing analysis does not apply to the rectified linear unit (ReLU) activation function. To fill in this void, we investigate here the approximation power of functional deep neural networks associated with the ReLU activation function by constructing a continuous piecewise linear interpolation under a simple triangulation. In addition, we establish rates of approximation of the proposed functional deep ReLU networks under mild regularity conditions. Finally, our study may also shed some light on the understanding of functional data learning algorithms.
		\keywords{Approximation theory \and Deep learning theory \and Functional neural networks \and ReLU \and Modulus of continuity}
		\subclass{68Q32 \and 68T05 \and 41A25}
	\end{abstract}
	
	\section{Introduction}
	\label{intro}
	The past decade has witnessed unquestionable success of deep learning based on deep neural networks in artificial intelligence. The invention of neural networks was originally inspired by neuron activities in human brains which dates back to the 1940s. Triggered by the availability of big data and the advance in computing power, deep neural networks have become prevalent in various fields of science, business, industry and many others. As is well known, neural networks effectively implement nonlinear mappings approximating functions that are learned based on a set of input-output data, typically through stochastic gradient descent (SGD). In spite of their impressive performance, a more thorough theoretical understanding of why they work so well is still highly demanded.
	
	The building blocks of a neural network are processing units. When an input vector $\mathbf{x}\in\mathbb{R}^d$ is fed into the network, a processing unit computes the function $\sigma(w\cdot \mathbf{x}+b)$, where $\sigma:\mathbb{R}\rightarrow \mathbb{R}$ is called an activation function, $w\in\mathbb{R}^d$ and $b\in\mathbb{R}$ are called weight vector and threshold respectively. The theory of function approximation by shallow or multi-layer neural networks was well developed around 1990. In \cite{Leshno93,Hornik89,Cybenko} much effort has been dedicated to understand the universality of this approximation for any non-polynomial locally bounded and piecewise continuous activation function, which was recently developed for deep convolutional neural networks with ReLU in \cite{Zhou18,Zhou20}. Besides, quantitative results about rates of approximation were also obtained
	in \cite{Hornik89,Barron93,Mhaskar93,Chui96}. However, most existing results in the literature about rates of approximation by neural networks were developed for infinitely differentiable activation functions $\sigma$ with one of the following two special assumptions: one is that for some $u_0\in\mathbb{R}$, \begin{equation} \label{assump1}
		\sigma^{(k)}(u_0)\neq 0, \quad\forall k\in\mathbb{N}_0,
	\end{equation}
	where $\sigma^{(k)}$ denotes the $k$-th order derivative of $\sigma$. The other assumption is that for some integer $i\neq 1,$ there holds
	\begin{equation} \label{assump2}
		\lim\limits_{u\rightarrow -\infty}\frac{\sigma(u)}{|u|^i}=0, \text{ and } \lim\limits_{u\rightarrow\infty}\frac{\sigma(u)}{u^i}=1.
	\end{equation}
	In modern deep learning models, the rectified linear unit (ReLU) is the most commonly used activation function due to its ease of computation and resistance to gradient vanishing. The ReLU activation function is defined by
	\begin{equation*}
		\sigma(u)=\max\{u,0\},
	\end{equation*}
	which is a piecewise linear function and does not satisfy the assumptions \eqref{assump1} or \eqref{assump2}. Recently, explicit rates of approximation by ReLU networks were obtained in \cite{Klusowski18} for shallow nets, in \cite{Shaham18} for nets with 3 hidden layers, and in \cite{Yarotsky17,Telgarsky16,Petersen18} for nets with more layers. Moreover, \cite{Yarotsky17} shows that deep ReLU networks are more efficient than shallow ones in approximating smooth functions and derives upper and lower bounds for the neural network complexity of approximation in Sobolev spaces.

	With the rapid growth of modern technology, learning with infinite dimensional data (referred as functional data) has become an important and challenging task in machine learning since the pioneering work \cite{Ramsay97}. Traditional functional data analysis based on kernel methods and functional principal component analysis mainly focuses on the estimation of linear target functionals \cite{Chen2022}, which is usually not true in practice. This motivates us to consider using neural networks to approximate nonlinear functionals defined on the infinite dimensional input space $L^p([-1, 1]^s)$ with $1\le p<\infty$. In \cite{Stinchcombe99}, one type of generalized neural networks is defined, where the input space $\mathcal{X}$ is not limited to $\mathbb{R}^d$ but can be any locally convex topological vector space. It shows that if the activation function $\sigma$ guarantees that the classical shallow networks defined on $\mathbb{R}^d$ are universal, then the proposed shallow generalized networks are also universal. In \cite{Rossi05}, the concept of functional multi-layer perceptrons is introduced by letting $\mathcal{X}=L^p(\mathbb{R}^s)$. The consistency of the proposed method is obtained by adopting the results of generalized networks in \cite{Stinchcombe99}. Throughout the paper, we refer the functional multi-layer perceptrons defined in \cite{Rossi05} as functional neural networks. To avoid learning functions in the functional neural networks, \cite{Rossi05} also proposes so-called parametric functional neural networks, which will be defined in Section 2. In \cite{Mhaskar97}, Mhaskar shows that if the activation function satisfies the assumption \eqref{assump1}, then any continuous functional on a compact domain can be approximated with any precision by a shallow parametric functional neural networks with sufficient width. Both upper and lower bounds on the rates of approximation are provided in terms of network complexity in \cite{Mhaskar97}.
	
	In this paper, we propose a parametric functional neural network with ReLU activation function aiming at approximating nonlinear continuous functional defined on $L^p([-1, 1]^s)$. First, we map the infinite dimensional domain $L^p([-1, 1]^s)$ into a finite dimensional polynomial space such that the original problem suffices to the approximation of multivariate functions. Then we construct a piecewise linear interpolation under a simple triangulation, not surprising, which is also a deep ReLU network we need. At last, we show that the same rate of approximation as in \cite{Mhaskar97} can be achieved in terms of the number of nonzero parameters in the proposed neural network under mild regularity conditions.
	
	The rest of this paper is structured as follows. Section 2 introduces the definitions for several types of neural networks. Section 3 is concerned with some notations, statement of assumptions and our main results.  Section 4 presents two important propositions and gives proofs of the main theorems. The paper concludes in Section 5 and some lemmas that are used in the proofs can be found in Appendix.
	
	\section{Definition of functional deep neural networks} Deep neural networks involve the choice of an activation function $\sigma:\mathbb{R}\rightarrow\mathbb{R}$ and a network architecture. In this paper we focus on ReLU.
	For $\mathbf{b}=(b_1,\cdots,b_d)\in\mathbb{R}^d$, we define the shifted activation function $\sigma_{\mathbf{b}}:\mathbb{R}^d\rightarrow\mathbb{R}^d$ as
	$$\sigma_\mathbf{b}\left( \begin{matrix}
		
		x_1 \\
		\vdots\\
		x_d
		
	\end{matrix}\right)=\left( \begin{matrix}
		
		\sigma(x_1+b_1) \\
		\vdots\\
		\sigma(x_d+b_d)
		
	\end{matrix}\right).$$
	
	A network architecture $(J,\mathbf{d})$ consists of a positive integer $J$ which is the number of hidden layers and width vector $\mathbf{d}=(d_1,\cdots,d_J)\in \mathbb{N}^J$ which indicates the width in each hidden layer. 
	
	Throughout the paper, we use $(\cdot)'$ to denote the transpose of a vector $(\cdot)$. Denote by $|\cdot|_p$ the vector $p$-norm, that is, $|a|_p=(\sum_{i=1}^{d}|a_i|^p)^{\frac{1}{p}}$ if $a$ is a vector with $d$ components.
	We first introduce the definition of a neural network for approximating multivariate functions.
	\begin{definition} [Classical net] \label{def 1}
		A multi-layer fully connected neural network $H:\mathbb{R}^{d_0}\rightarrow \mathbb{R}$ with network architecture $(J,\mathbf{d})$ is any function that takes the form
		\begin{equation} \label{classical}
			H(\mathbf{x})=\mathbf{a}'\sigma_{\mathbf{b_J}}W_J\sigma_{\mathbf{b}_{J-1}}W_{J-1}\cdots\sigma_{\mathbf{b}_2}W_2\sigma_{\mathbf{b}_1}W_1\mathbf{x},
		\end{equation}
		where $\mathbf{x}\in\mathbb{R}^{d_0}$, $\mathbf{a}\in\mathbb{R}^{d_J},\mathbf{b}_j\in\mathbb{R}^{d_j}$, and $W_j=\left(W_j^{i,k}\right)$ is a $d_j\times d_{j-1}$ weight matrix, $j=1,\cdot\cdot\cdot,J.$
	\end{definition}
	
	Let $s$ be a positive integer, we consider the function space $L^p([-1,1]^s)=\{f:[-1,1]^s\rightarrow\mathbb{R} \ |\ f \text{ is measurable and }||f||_p< \infty\}$ where
	\begin{equation*}
		||f||_p=\left(\int_{[-1,1]^s}|f(\mathbf{x})|^pd\mathbf{x}\right)^{\frac{1}{p}}, \hbox{ when } 1\le p <\infty
	\end{equation*}
	and
	\begin{equation*}
		||f||_p=\mathop{ess\sup}\limits_{\mathbf{x}\in[-1,1]^s}|f(\mathbf{x})|, \hbox{ when } p=\infty.
	\end{equation*}
	Specially when $p=2$, $L^2([-1,1]^s)$ is a Hilbert space, and we denote the inner product by $\langle \cdot,\cdot\rangle$, that is,
	\begin{equation}\label{innnerproduct}
		\langle f_1,f_2\rangle = \int_{[-1,1]^s}f_1(\mathbf{x})f_2(\mathbf{x})d\mathbf{x}.
	\end{equation}
	
	We now introduce the definition of functional neural network \cite{Rossi05} for approximating functional defined on $L^p([-1,1]^s)$.
	\begin{definition} [Functional net] \label{fn}
		A functional neural network $\Theta: L^p([-1,1]^s)\rightarrow \mathbb{R}$ with network architecture $(J,\mathbf{d})$ is any functional that takes the form
		\begin{equation} \label{functional}
			\Theta(f)=\mathbf{a}'\sigma_{\mathbf{b_J}}W_J\sigma_{\mathbf{b}_{J-1}}W_{J-1}\cdots\sigma_{\mathbf{b}_2}W_2\sigma_{\mathbf{b}_1}T(f),
		\end{equation}
		where $f\in L^p([-1,1]^s),$ $\mathbf{a}\in\mathbb{R}^{d_J},\mathbf{b}_j\in\mathbb{R}^{d_j}, j=1,\cdots,J.$ $W_j=\left(W_j^{i,k}\right)$ is a $d_j\times d_{j-1}$ matrix, $j=2,\cdot\cdot\cdot,J.$ Here $T:L^p([-1,1]^s)\rightarrow\mathbb{R}^{d_1}$ is a bounded linear operator with $T(f)=\big(\int_{[-1,1]^s} f(\mathbf{x})g_1(\mathbf{x})d\mathbf{x},\cdots,\int_{[-1,1]^s}f(\mathbf{x})g_{d_1}(\mathbf{x})d\mathbf{x} \big)'$ for $\{g_k\}_{k=1}^{d_1}\in L^q([-1,1]^s)$, and $q$ is the conjugate exponent of $p$ satisfying $1/p+1/q=1$.
	\end{definition}

	\begin{remark}
		Comparing \eqref{classical} and \eqref{functional}, the difference is that the functional net uses a bounded linear operator $T$ in the first hidden layer (functional layer), while the classical net uses a numerical weight matrix $W_1$ (numerical layer).  An example of functional net with network architecture $(J=2,\mathbf{d}=(3,2))$ is given in Figure \ref{FNN}.
	\end{remark}
	
	\begin{remark}
		The dual space of $L^{\infty}([-1,1]^s)$ is larger than $L^1([-1,1]^s)$. In this case, we can restrict the operator $T$ to be induced by functions in $L^1([-1,1]^s)$.
	\end{remark}
	
	Note that one drawback of this functional net is that the functions $\{g_k\}_{k=1}^{d_1}$ can not be learned directly. This can be addressed by using parametric representation of functions as follows.
	
	\begin{definition} [Parametric functional net] \label{pfn}
		Let $d_0\in\mathbb{N}$, $\{v_k\}_{k=1}^{d_0}$ be a linearly independent set in $L^q([-1,1]^s)$, and denote 
		\begin{equation*}
			\nu_i=\int_{[-1,1]^s} f(\mathbf{x})v_i(\mathbf{x})d\mathbf{x},\quad i=1,\cdots,d_0,
		\end{equation*}
		then a parametric functional neural network $\Theta_v: L^p([-1,1]^s)\rightarrow \mathbb{R}$ with network architecture $(J,\mathbf{d})$ is any functional that takes the form
		\begin{equation} \label{parametric}
			\begin{aligned}
				\Theta_v(f)&=\mathbf{a}'\sigma_{\mathbf{b_J}}W_J\sigma_{\mathbf{b}_{J-1}}W_{J-1}\cdots\sigma_{\mathbf{b}_2}W_2\sigma_{\mathbf{b}_1}W_1\mathbf{\nu},\\
				\mathbf{\nu}&=(\nu_1,\cdots,\nu_{d_0})',
			\end{aligned}
		\end{equation}
		where $f\in L^p([-1,1]^s),$ $\mathbf{a}\in\mathbb{R}^{d_J},\mathbf{b}_j\in\mathbb{R}^{d_j}$, and $W_j=\left(W_j^{i,k}\right)$ is a $d_j\times d_{j-1}$ matrix, $j=1,\cdot\cdot\cdot,J.$ Here the subscript $v$ is used to indicate that the linearly independent set in $L^p([-1,1]^s)$ for parametrization is $\{v_k\}_{k=1}^{d_0}.$
	\end{definition}
	
	\begin{remark}
		Here $\{v_k\}_{k=1}^{d_0}$ is a set of known functions that does not need to be learned, and the choice of them is related to a continuous linear operator $V_m$ to be defined in Subsection \ref{Discretizing}.
	\end{remark}
	
	\begin{remark}
		A functional net is also a parametric functional net if we assume that $\{g_i\}_{i=1}^{d_1}$ in Definition \ref{fn} has a parametric representation using the linearly independent set $\{v_k\}_{k=1}^{d_0}$, that is,
		\begin{equation*}
			g_i=W_1^{i,1}v_1+W_1^{i,2}v_2+\cdot\cdot\cdot+W_1^{i,d_0}v_{d_0},
		\end{equation*}
		for some coefficients $W_1^{i,k}\in\mathbb{R}$, $k=1,\cdot\cdot\cdot,d_0$, $i=1,\cdot\cdot\cdot,d_1$. Then the problem of learning functions $\{g_i\}_{i=1}^{d_1}$ turns into learning weight matrix $W_1$, and this is the reason why \eqref{parametric} is called parametric functional net. 
	\end{remark}

	In addition to the network architecture $(J,\mathbf{d})$, the network \eqref{parametric} is also determined by the numerical weight matrix $W_j$, shift vectors $\mathbf{b}_j$, ~$j=1,\cdot\cdot\cdot,J$~, and output vector $\mathbf{a}$. We denote by $M(\Theta_v)=M:=\sum_{j=1}^{J}||W_j||_0+\sum_{j=1}^{J}||\mathbf{b}_j||_0+||\mathbf{a}||_0$ the total number of nonzero weights of ~$\Theta_v$, where $||\cdot||_0$ means the number of nonzero entries in a vector or a matrix. We will use $M$ in this paper as the complexity of the neural networks to characterize rates of approximation.
	
	\begin{figure}[htbp]
		{
			\centering
			\includegraphics[width=0.7\textwidth]{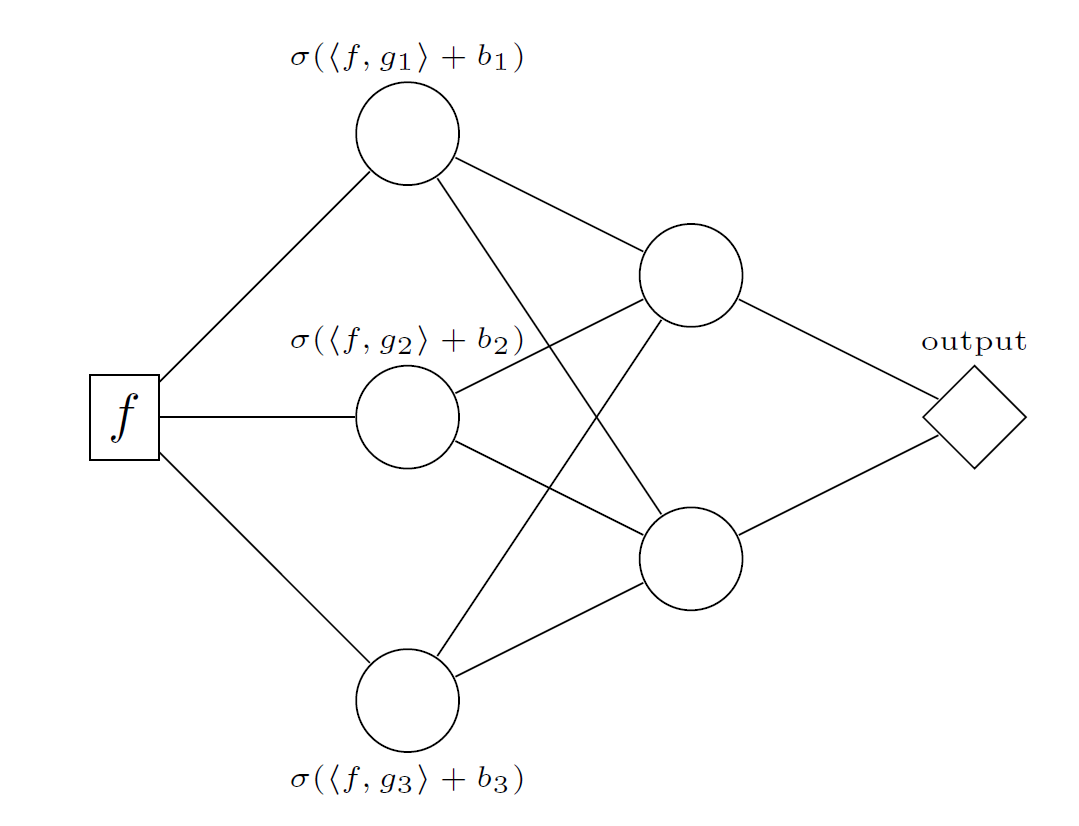}
			\caption{A functional net with 2 hidden layers. The input neuron (rectangle) is a function, the rest 5 hidden neurons (circle) and the output neuron (diamond) are real numbers. In the first hidden layer, three functional weights $g_1$, $g_2$, $g_3$ are used.}
			\label{FNN}
		}
	\end{figure}

	\section{Main results on rates of approximation}
	
	In this section, we state our main results and the proof will be given in Section 4. Let $F:L^p([-1,1]^s)\rightarrow\mathbb{R}$ be the target functional. We are interested in approximating $F$ by constructing a parametric functional net. 
	
	\subsection{Assumptions on input function and target functional}
	When deriving the quantitative results about rates of approximation for a function defined on $\mathbb{R}^d$, we need to make priori assumptions on its smoothness. For a target functional, we can make a similar assumption by adopting the definition of modulus of continuity. We assume that the target functional $F:L^p([-1,1]^s)\rightarrow\mathbb{R}$, though unknown, is continuous with modulus of continuity $\omega_F:(0,\infty)\rightarrow(0,\infty)$ given by
	\begin{equation*}
		\omega_F(r)=\sup\big\{|F(f_1)-F(f_2)|:f_1,f_2\in L^p([-1,1]^s), ||f_1-f_2||_p\le r\big\}.
	\end{equation*}
	It is well known that the modulus of continuity $\omega_F$ is an increasing function and satisfies the following property
	\begin{equation*}
		{
			|F(f_1)-F(f_2)|\le\omega_F\left(||f_1-f_2||_p\right), \quad \forall f_1,f_2\in L^p([-1,1]^s).
		}
	\end{equation*}
	Moreover, the sub-additive property holds, that is,
	\begin{equation*}
		{
			\omega_F(r_1+r_2)\le\omega_F(r_1)+\omega_F(r_2), \quad r_1,r_2>0.
		}
	\end{equation*}

	\subsection{Properties of compact subsets of $L^p([-1,1]^s)$}
	We make a priori assumption that the input function belongs to a compact subset $K$ of $L^p([-1,1]^s)$. Under this assumption, according to \cite[page 33]{Lorentz53}, there exists a constant $c_K$ such that
	\begin{equation} \label{f}
		||f||_p\le c_K, \quad\forall f\in K,
	\end{equation}
	and the approximation by polynomials in $\Pi_{m}:=\Pi_m([-1,1]^s)$, the class of all polynomials in $s$ variables
	of coordinatewise degree not exceeding $m$, can be bounded as
	\begin{equation} \label{epsilon}
		\min\limits_{Q\in\Pi_m}||f-Q||_p \le \epsilon_{m,K},\quad\forall f\in K,
	\end{equation}
	where $\big\{\epsilon_{m,K}\big\}_{m=1}^{\infty}$ is a sequence converging to $0$ which depends only on $K$, meaning that the convergence is uniformly on $f\in K$.
	
	Now we are in the position to state our main results.
	\begin{theorem} \label{result1}
		Let $s,m,M\in\mathbb{N}$, $1\le p<\infty$, and set $d_0=(2m+1)^s.$ 
		If $F:L^p([-1,1]^s)\rightarrow\mathbb{R}$ is a continuous functional with modulus of continuity $\omega_F$, then for any compact set $K\subset L^p([-1,1]^s)$, there exists a parametric functional deep ReLU network $\Theta_v$ with the depth $J=d_0^2+d_0+1$ and the number of nonzero weights ${M}$ such that
		\begin{equation*}
			\sup_{f\in K}|F(f)-\Theta_v(f)|\le \omega_F(C\epsilon_{m,K}) +2(2m+1)^s\omega_{F}\left(\frac{Cd_0^{\frac{4}{d_0}}m^{\theta}}{M^{\frac{1}{d_0}}}\right),
		\end{equation*}
		where $\theta=2s|\frac{1}{p}-\frac{1}{2}|$, and $C$ is a constant independent of $m$ or $M$, which will be given explicitly in the proof.
	\end{theorem}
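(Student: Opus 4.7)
\medskip
\noindent\textbf{Proof plan.}
The plan is to reduce the infinite-dimensional approximation problem to a finite-dimensional one and then realise the finite-dimensional approximant as a deep ReLU network, gluing both pieces together through the parametric functional architecture of Definition \ref{pfn}.

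\medskip
\noindent\emph{Step 1: Finite-dimensional reduction.} I would introduce (as hinted in Subsection \ref{Discretizing}) a continuous linear operator $V_m : L^p([-1,1]^s)\to \Pi_m$ that realises the best-approximation rate \eqref{epsilon}, so that $\|f - V_m f\|_p \le C\epsilon_{m,K}$ uniformly on $K$. Using the modulus of continuity this gives
\begin{equation*}
	|F(f) - F(V_m f)| \le \omega_F(C\epsilon_{m,K}),
\end{equation*}
which is the first term of the bound. The remaining task is to approximate the composition $F \circ V_m$, which depends on $f$ only through a finite vector of linear functionals $\nu = (\nu_1,\dots,\nu_{d_0})'$ given by pairing $f$ against the dual basis $\{v_k\}_{k=1}^{d_0}$ of $\Pi_m$ (chosen so that $V_m f = \sum_k \nu_k \phi_k$ for an appropriate basis $\{\phi_k\}$ of $\Pi_m$). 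This is exactly the functional layer prescribed in Definition \ref{pfn}.

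\medskip
\noindent\emph{Step 2: Passing between $L^p$ and coefficient norms.} The induced map $G : \mathbb{R}^{d_0} \to \mathbb{R}$, $G(\nu) = F\bigl(\sum_k \nu_k \phi_k\bigr)$, must be controlled on a bounded box in $\mathbb{R}^{d_0}$ whose size is determined by $c_K$ from \eqref{f}. Here the Nikolskii-type inequality for algebraic polynomials on $[-1,1]^s$ is essential: for $Q\in\Pi_m$ one has $\|Q\|_p \lesssim m^{\theta}\|Q\|_2$ (or the reverse embedding, depending on whether $p\le 2$ or $p\ge 2$) with $\theta = 2s|\tfrac1p-\tfrac12|$. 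This is exactly where the factor $m^{\theta}$ in the statement comes from: it translates an $\ell^2$-closeness of coefficient vectors into an $L^p$-closeness of the polynomials, so that $G$ inherits the modulus $\nu\mapsto \omega_F(Cm^{\theta}|\nu-\nu'|_2)$.

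\medskip
\noindent\emph{Step 3: ReLU network for $G$ via simplicial interpolation.} On the bounded domain in $\mathbb{R}^{d_0}$, I would triangulate using a uniform simplicial mesh with $(2m+1)^s = d_0$ controlling scale parameters, and build the continuous piecewise linear interpolant $\widetilde G$ of $G$ at the vertices. The error is bounded by $\omega_{F}$ evaluated at the mesh diameter, which after applying Step 2 gives the factor $\omega_F(Cd_0^{4/d_0}m^{\theta}M^{-1/d_0})$. The key technical lemma, stated in the propositions of Section 4, is that a piecewise linear function on such a triangulation in $\mathbb{R}^{d_0}$ can be realised exactly by a ReLU network of depth $O(d_0^2)$ (this explains $J=d_0^2+d_0+1$) and with a number $M$ of nonzero parameters that scales polynomially per simplex; inverting this count produces the $M^{-1/d_0}$ mesh resolution and the $d_0^{4/d_0}$ book-keeping constant from the number of hat functions needed per vertex.

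\medskip
\noindent\emph{Step 4: Assembly.} Finally I would verify that feeding $\nu = (\nu_1,\dots,\nu_{d_0})'$ (a vector of linear functionals of $f$) into this ReLU network yields a parametric functional net in the sense of \eqref{parametric}. Combining the polynomial reduction error from Step 1, the Nikolskii conversion in Step 2, and the interpolation error in Step 3 via the sub-additivity of $\omega_F$, summed over the at most $2(2m+1)^s$ simplices meeting any point, gives exactly the claimed bound. I expect the main obstacle to be Step 3: carefully counting the nonzero weights and the depth required to implement a piecewise linear function on a simplicial mesh in $d_0$ dimensions with ReLU units, without losing the clean exponent $1/d_0$ or picking up spurious factors in $M$. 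The remaining ingredients (Step 1 and Step 2) are direct applications of \eqref{epsilon}, Nikolskii inequalities, and the definition of $\omega_F$.
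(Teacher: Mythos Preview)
Your proposal is essentially correct and follows the paper's approach closely: Proposition \ref{result2} carries out Step~1, Lemma \ref{omegamu} (via the Nikolskii inequality of Lemma \ref{Lp}) handles Step~2, and Proposition \ref{result3} implements Step~3 via the simplicial spike function \eqref{14} and Lemma \ref{nonzero}, with the assembly exactly as in your Step~4. Two minor corrections: $V_m$ maps into $\Pi_{2m}$ rather than $\Pi_m$ (this is why $d_0=(2m+1)^s$), and the prefactor $2(2m+1)^s$ does not arise from summing over simplices meeting a point---it comes instead from the $\ell^2$-diameter $\sqrt{t}\cdot\tfrac{2R}{N}$ of a single simplex combined with the sub-additivity of $\omega_{\mu_{m,F}}$, which turns $\omega_{\mu}(\sqrt{t}R/N)$ into $\sqrt{t}\,\omega_{\mu}(2R/N)$ and similarly for the interpolant (see Step~3 of the proof of Proposition \ref{result3}).
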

	
	We give two examples to illustrate \eqref{epsilon} in the following remarks.
	\begin{remark}
		Let	$k\ge 1$ be an integer. We consider the Sobolev space $W^{k,p}([-1,1]^s)$, which consists of functions whose partial derivatives of order up to $k$ belong to $L^p([-1,1]^s)$. The Sobolve norm of $f\in W^{k,p}([-1,1]^s)$ is defined by
		\begin{equation*}
			||f||_{W^{k,p}}:=\sum_{0\le \alpha\le k}||{\mathop{D}}^{\alpha}f||_p,
		\end{equation*}
		where for multi-integer $\alpha=(\alpha_1,\cdots,\alpha_s)\in \mathbb{Z}^s$, $0\le \alpha\le k$ {means each entry of $\alpha$ is an integer between $0$ and $k$.} Let $|\alpha|=\sum_{j=1}^s|\alpha_j|$ and
		\begin{equation*}
			{\mathop{D}}^{\alpha}f=\frac{\partial^{|\alpha|}f}{\partial x_1^{\alpha_1}\cdots x_s^{\alpha_s}}, \quad \mathbf{\alpha}\ge 0.
		\end{equation*}
		It is well-known \cite{Mhaskar96} that there exists a constant $c_{s,k,p}$ such that
		\begin{equation*}
			\min\limits_{Q\in\Pi_m}||f-Q||_p \le c_{s,k,p}m^{-k}||f||_{W^{k,p}}, \quad \forall m\ge 0.
		\end{equation*}
		If we set $K$ to be the unit ball of $W^{k,p}([-1,1]^s)$, then $\epsilon_{m,K}=c_{s,k,p}m^{-k}$.
	\end{remark}
	
	\begin{remark} \label{remark}
		Take into account the set of functions satisfying a H\"older condition of order ~$\beta>0,$ denoted by $C^{\beta}([-1,1]^s)$.
		For $0<\beta\le 1$, $C^{\beta}([-1,1]^s)$ consists of Lipschitz-$\beta$ functions with norm 
		$$||f||_{C^{\beta}}=||f||_{\infty}+\sup_{x\neq y}\frac{|f(x)-f(y)|}{|x-y|_2^{\beta}}.$$
		For $\beta=k+\beta'$ with $k\in\mathbb{N}$ and $0<\beta'\le 1$, $C^{\beta}([-1,1]^s)$
		consist of $k$ times differentiable functions whose partial derivatives of order $k$ are Lipschitz-$\beta'$ functions with an equivalent norm
		\begin{equation*}
			||f||_{C^{\beta}}:=\sum_{|\alpha|\le k}||{\mathop{D}}^{\alpha}f||_{\infty}+\sum_{|\alpha|= k}||{\mathop{D}}^{\alpha}f||_{C^{\beta'}}.
		\end{equation*}
		It is well-known that for any $\beta >0, s\in\mathbb{N}$, there exists a constant $c_{s,\beta}$ such that
		\begin{equation*}
			\min\limits_{Q\in\Xi_m}||f-Q||_{\infty} \le c_{s,\beta}m^{-\beta}||f||_{C^{\beta}}, \quad \forall m\ge 0,
		\end{equation*}
		where $\Xi_m=\Xi_m([-1,1]^s)$ is the class of all polynomials on $[-1,1]^s$ of degree up to $m$. Setting $K$ to be the unit ball of $C^{\beta}([-1,1]^s)$, then $\epsilon_{m,K}=c_{s,\beta}m^{-\beta}$ as $\Xi_m\subset\Pi_m$.
		
	\end{remark}
	
	\begin{theorem} \label{corro} 
		Let $s,M\in\mathbb{N}$, $1\le p<\infty$, $\beta>0$ and $K$ be the unit ball of $C^{\beta}([-1,1]^s)$. If $\omega_F(r)\le c_6r^{\lambda}$ for some $\lambda\in(0,1]$, then we can find a parametric functional deep ReLU network $\Theta_v$ with depth $$J\le  \widetilde{C}\left(\frac{\log M}{\log(\log M)}\right)^{2}$$ and number of nonzero weights $M$ such that
		\begin{equation}\label{onlyM}
			\sup_{f\in K}|F(f)-\Theta_v(f)|=O\left(\left(\frac{\log M}{\log(\log M)}\right)^{-\frac{\beta\lambda}{s}}\right),
		\end{equation}
		where $c_6$ is a positive constant, and $\widetilde{C}$ is a positive constant depending on $s,\lambda,\beta,p$.
	\end{theorem}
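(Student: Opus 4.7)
The plan is to deduce Theorem \ref{corro} directly from Theorem \ref{result1} by specializing the general estimate to the Hölder class $C^{\beta}([-1,1]^s)$ and tuning the polynomial degree $m$ as a carefully chosen function of $M$. Using Remark \ref{remark}, we have $\epsilon_{m,K}=c_{s,\beta}m^{-\beta}$, and the Hölder bound $\omega_F(r)\le c_6 r^{\lambda}$ lets us replace every occurrence of $\omega_F(\cdot)$ by an explicit power. Substituting into the bound of Theorem \ref{result1}, the approximation error becomes
\begin{equation*}
\sup_{f\in K}|F(f)-\Theta_v(f)|\;\lesssim\; m^{-\beta\lambda}\;+\;(2m+1)^{s}\,\Bigl(\frac{d_0^{4/d_0}\,m^{\theta}}{M^{1/d_0}}\Bigr)^{\lambda},
\end{equation*}
with $d_0=(2m+1)^s$ and $\theta=2s|1/p-1/2|$. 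The first term is easy, the nontrivial task is to control the second term; the obstruction is the factor $M^{-\lambda/d_0}$, which only beats the prefactor $(2m+1)^s m^{\theta\lambda}$ if $d_0$ grows \emph{strictly slower} than $\log M$.

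The key balancing step is to choose $m$ so that $d_0=(2m+1)^s\approx \frac{\log M}{c\log\log M}$ for a constant $c$ that we will fix large enough. Concretely, I would take $m=\bigl\lfloor\tfrac12\bigl(\log M/(c\log\log M)\bigr)^{1/s}\bigr\rfloor$, which forces $m\asymp(\log M/\log\log M)^{1/s}$ and hence
\begin{equation*}
m^{-\beta\lambda}\;=\;O\Bigl(\bigl(\log M/\log\log M\bigr)^{-\beta\lambda/s}\Bigr),
\end{equation*}
matching the target rate. For the second term, with this choice $M^{1/d_0}=\exp(\log M/d_0)=(\log M)^{c}$, while $(2m+1)^s=d_0=O(\log M/\log\log M)$ and the factor $d_0^{4/d_0}$ is bounded. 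Thus the second term is at most
\begin{equation*}
\frac{d_0\,m^{\theta\lambda}}{(\log M)^{c\lambda}}\;\lesssim\;(\log M)^{1+\theta\lambda/s-c\lambda},
\end{equation*}
up to $\log\log M$ factors. Choosing $c$ so large that $c\lambda>1+\theta\lambda/s+\beta\lambda/s$ (e.g. $c=(1+\theta/s+\beta/s)/\lambda+1$) makes this second term negligible compared to the first, yielding the advertised rate \eqref{onlyM}.

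Finally, the depth bound follows automatically: Theorem \ref{result1} gives $J=d_0^2+d_0+1$, and with our choice $d_0\asymp \log M/\log\log M$ we obtain
\begin{equation*}
J\;\le\;\widetilde{C}\Bigl(\frac{\log M}{\log\log M}\Bigr)^{2},
\end{equation*}
where $\widetilde{C}$ absorbs $c$ and the dimension-dependent constants. The main obstacle is purely the balancing calculation — in particular, ensuring $c$ is chosen large enough to dominate \emph{both} the $(2m+1)^s$ prefactor and the polynomial factor $m^{\theta\lambda}$ produced by the $L^p$-versus-$L^2$ mismatch exponent $\theta$; all other steps are direct substitutions into Theorem \ref{result1} and Remark \ref{remark}.
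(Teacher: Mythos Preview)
Your proposal is correct and follows essentially the same approach as the paper: both start from Theorem \ref{result1}, substitute $\epsilon_{m,K}\asymp m^{-\beta}$ and $\omega_F(r)\le c_6 r^{\lambda}$, and then balance the two error terms by choosing $m$ so that $d_0=(2m+1)^s$ is of order $\log M/\log\log M$. The only cosmetic difference is that the paper fixes $m$ implicitly via the sandwich $c_9 m^s\log(3m)\le\log M<c_9(m+1)^s\log(3(m+1))$ with an explicit constant $c_9$ depending on $s,\lambda,\theta,\beta$, and then deduces $m^s\asymp\log M/\log\log M$, whereas you pick $m$ explicitly as $\bigl\lfloor\tfrac12(\log M/(c\log\log M))^{1/s}\bigr\rfloor$; both routes lead to the same estimates for the second term, the depth bound, and the final rate.
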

	To approximate the H\"older space $C^{\beta}([-1,1]^s)$, generalized translation networks were used in \cite{Mhaskar97} with infinitely differentiable activation functions satisfying the assumption \eqref{assump1}, and a rate of approximation
	\begin{equation*}
		\omega_F\left(\frac{\log M_p}{\log(\log M_p)}\right)^{-\frac{\beta}{s}}
	\end{equation*}
	was derived in \cite{Mhaskar97}, where $M_p$ denotes the total number of parameters in the translation network. Here, Theorem \ref{corro} is doing the same thing and it reveals that we can still achieve the same rate by using functional deep ReLU networks if the modulus of continuity $\omega_F$ satisfies some condition. Also, by using the nonlinear $N$-width for the set of functionals with a common modulus of continuity $\omega_F$ in the case of certain compact $K\subset L^2([-1,1]^s)$, \cite{Mhaskar97} further established a lower bound 
	$$\omega_F\left((\log M_p)^{-\frac{\beta}{s}}\right).$$
	As we can see, the rate given in \eqref{onlyM} matches this lower bound up to the $\log(\log(M_p))$ term in the denominator.

	\section{Approximation by continuous linear operators and deep ReLU networks}
	In this section, we introduce two important propositions and then use them to prove Theorems $\ref{result1}$ and \ref{corro}.
	\subsection{Discretizing functions into vectors} \label{Discretizing}
	It is well-known in approximation theory that there exists a continuous linear operator $V_m: L^p([-1,1]^s)\rightarrow\Pi_{2m}$ such that
	\begin{equation} \label{Vm}
		||f-V_mf||_p\le c\min\limits_{Q\in\Pi_m}||f-Q||_p, \quad \forall f \in L^p([-1,1]^s),
	\end{equation}
	where $c$ is a positive constant depending only on $p$ and $s$. One example of operators satisfying \eqref{Vm} is mentioned in \cite{Mhaskar96}, the construction of which is based on the de la Vall\'ee Poussin operator. Actually, there are many such operators known in the literature \cite{Timan63,Lorentz66}.
	
	Due to the continuity, linearity and finite range of $V_m$, we can represent it in an explicit way. For simplicity here we consider Legendre polynomials which form a classical orthonormal basis of the function space $L^2([-1,1]^s)$. In the univariate case, Legendre polynomials are defined by
	\begin{equation*}
		\mathbb{L}_n(x):=\frac{(-1)^n\sqrt{n+1/2}}{2^nn!}\left(\frac{d}{dx}\right)^n\{(1-x^2)^n\}, \quad n=0,1,2,\cdots
	\end{equation*} 
	For $\mathbf{x}=(x_1,\cdots,x_s)\in \mathbb{R}^s$, and $\mathbf{k}=(k_1,\cdots,k_s)\in\mathbb{Z}^s_{+}$, we write
	\begin{equation*}
		\mathbb{L}_{\mathbf{k}}(\mathbf{x}):=\prod_{j=1}^{s}\mathbb{L}_{k_j}(x_j).
	\end{equation*}
	Note that the $\{\mathbb{L}_{\mathbf{k}}\}_{\mathbf{k}\in\mathbb{Z}^s_{+}}$ satisfies
	\begin{equation*}
		\langle\mathbb{L}_{\mathbf{k}},\mathbb{L}_{\mathbf{k}'}\rangle=\left\{
		\begin{aligned}
			1,&\qquad \text{if } \mathbf{k}=\mathbf{k}', \\
			0,&\qquad \text{otherwise},
		\end{aligned}
		\right.
	\end{equation*}
	with respect to the inner product $\langle\cdot,\cdot\rangle$ defined in \eqref{innnerproduct}.
	We replace the multi index set $\{0,1,\cdots\}^s$ by the usual one $\{1,2,\cdots\}$ with an order arranged by the total degrees, then the set $\{\mathbb{L}_{\mathbf{k}}(\mathbf{x})\}_{\mathbf{k}\in\mathbb{Z}^s_{+}}$ becomes $\{\mathbb{L}_{k}(\mathbf{x})\}_{k\ge 1}$. It is easy to see that the first $(2m+1)^s$ functions $\{\mathbb{L}_1,\cdots,\mathbb{L}_{(2m+1)^s}\}$ form a basis of the polynomial space $\Pi_{2m}.$ If $1\le p <\infty$ and take $q$ to be the conjugate exponent of $p$, then there exist functions $\{v_k\in L^q([-1,1]^s),k=1,...,(2m+1)^s\}$ depending on $V_m$ such that,
	\begin{equation}\label{vk}
		V_m(f)(\mathbf{x})=\sum\limits_{k=1}^{(2m+1)^s}\left(\int_{[-1,1]^s} f(\mathbf{x})v_k(\mathbf{x})d\mathbf{x}\right)\mathbb{L}_k(\mathbf{x}), \quad \mathbf{x}\in [-1,1]^s.
	\end{equation}
	For simplicity, we use $t=t(m)=(2m+1)^s$ in the rest of this paper, and we choose $\{v_k\}_{k=1}^t$ to be the set used for parametrization in Definition \ref{pfn} in the following theoretical analysis. As for the case $p=\infty$, the form \eqref{vk} dose not hold, since the dual space of $L^{\infty}([-1,1]^s)$ is much larger than $L^1([-1,1]^s)$, therefore $p=\infty$ is not included in this paper. But for a specific family of operators $\{V_m\}$, it is possible to choose $\{v_k\}\subset L^1([-1,1]^s)$ in the representation \eqref{vk}, then the case $p=\infty$ can be covered.

	\begin{proposition} \label{result2}
		Let $s,m\in\mathbb{N}$, $1\le p <\infty$ and set $t=(2m+1)^s$. 
		Take any $\{v_k\}_{k=1}^{t}$ and $V_m$ satisfying \eqref{Vm} and \eqref{vk}. 
		If $F:L^p([-1,1]^s)\rightarrow\mathbb{R}$ is a continuous functional with modulus of continuity $\omega_F$, then for any compact set $K\subset L^p([-1,1]^s)$ and any $f\in K$, we have
		\begin{equation*}
			|F(f)-F(V_mf)|\le \omega_F(c\epsilon_{m,K}).
		\end{equation*}

	\end{proposition}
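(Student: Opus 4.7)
The plan is to chain together three facts already available in the excerpt: the defining property of the modulus of continuity, the bound \eqref{Vm} on the operator $V_m$, and the compactness-based estimate \eqref{epsilon}. There is no heavy machinery here; the argument is essentially a one-line telescoping.

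First I would apply the modulus-of-continuity inequality
\[
|F(f_1)-F(f_2)|\le \omega_F(\|f_1-f_2\|_p)
\]
with $f_1=f$ and $f_2=V_m f$, which is legal because $V_m f\in \Pi_{2m}\subset L^p([-1,1]^s)$. This reduces the task to controlling $\|f-V_m f\|_p$.

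Next I would invoke \eqref{Vm}, which says $\|f-V_m f\|_p\le c\min_{Q\in\Pi_m}\|f-Q\|_p$ for all $f\in L^p([-1,1]^s)$ with a constant $c=c(s,p)$. Then, using the hypothesis $f\in K$ together with \eqref{epsilon}, I get $\min_{Q\in\Pi_m}\|f-Q\|_p\le \epsilon_{m,K}$, so $\|f-V_m f\|_p\le c\,\epsilon_{m,K}$. Since $\omega_F$ is increasing, applying it to both sides gives the claimed bound, and the estimate is uniform in $f\in K$ because both \eqref{Vm} and \eqref{epsilon} are uniform.

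There is really no serious obstacle; the only point to be a bit careful about is that the constant $c$ appearing in the statement of the proposition is exactly the one furnished by \eqref{Vm}, and that $\omega_F$ is well-defined and monotone on $(0,\infty)$, both of which are already recorded in the excerpt.
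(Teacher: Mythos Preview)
Your proposal is correct and follows essentially the same route as the paper: apply the modulus-of-continuity inequality to $F(f)-F(V_mf)$, then chain \eqref{Vm} with \eqref{epsilon} to bound $\|f-V_mf\|_p\le c\,\epsilon_{m,K}$, and conclude by monotonicity of $\omega_F$. The paper's proof is identical in substance, just slightly more terse.
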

	\begin{proof}
		By the definition of modulus of continuity, we have
		\begin{equation}\label{moc1}
			|F(f)-F(V_mf)|\le \omega_F(||f-V_mf||_p).
		\end{equation}
		The non-decreasing property of $\omega_F$ together with (\ref{epsilon}) and (\ref{Vm}) lead to
		\begin{equation}\label{moc2}
			\omega_F(||f-V_mf||_p)\le \omega_F(c\epsilon_{m,K}), \quad \forall f\in K.
		\end{equation}
		The desired conclusion follows by combining \eqref{moc1} and \eqref{moc2}.
	\end{proof}
	
	Define an isometric isomorphism {$\phi:\big(\Pi_{2m},||\cdot||_2\big)\rightarrow \big(\mathbb{R}^t,|\cdot|_2\big)$} given by
	\begin{equation*}
		\phi(Q)=\big(\langle Q,\mathbb{L}_1\rangle,\cdot\cdot\cdot,\langle Q,\mathbb{L}_t\rangle\big)'
	\end{equation*}
	for $Q\in\Pi_{2m}$ and denote by $\phi^{-1}$ the inverse of $\phi$. We define $\mu_{m,F}:=F\circ\phi^{-1}:\mathbb{R}^t\rightarrow\mathbb{R}$, then $F(V_mf)=F\circ\phi^{-1}(\phi(V_mf))=\mu_{m,F}(\phi(V_mf))$. The input function $f$ is then discretized to a vector with $t$ components
	\begin{equation} \label{phi}
		\phi(V_mf)=\left(\int_{[-1,1]^s} f(\mathbf{x})v_1(\mathbf{x})d\mathbf{x},\cdot\cdot\cdot,\int_{[-1,1]^s} f(\mathbf{x})v_t(\mathbf{x})d\mathbf{x}\right)'.
	\end{equation}
	
	\subsection{Constructing neural networks for approximation}
	For our analysis, we need a lemma on comparing $L_p$-norms of polynomials which can be found in \cite{Mhaskar97} as Lemma 3.1.
	\begin{lemma} \label{Lp}
		Let  $p,q\in[1,\infty]$, then for any $m\in \mathbb{N}$ and $Q\in \Pi_{2m}$, we have
		\begin{equation*}
			||Q||_p\le c_1m^{2s\max\{\frac{1}{q}-\frac{1}{p},0\}}||Q||_q,
		\end{equation*}
		where $c_1$ is a constant independent of $m$.
	\end{lemma}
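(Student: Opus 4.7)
The lemma is a multivariate Nikolskii-type comparison of $L^p$ and $L^q$ norms of polynomials of coordinatewise degree at most $2m$ on $[-1,1]^s$. My plan is to split the argument into two cases according to the sign of $1/q - 1/p$. When $p \le q$, the factor $m^{2s\max\{1/q - 1/p, 0\}}$ collapses to $1$, and the assertion reduces to $\|Q\|_p \le c_1 \|Q\|_q$. This is immediate from H\"older's inequality on the bounded domain $[-1,1]^s$ of measure $2^s$, which gives $\|Q\|_p \le 2^{s(1/p - 1/q)}\|Q\|_q \le 2^s \|Q\|_q$ with a constant independent of $m$.

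When $p > q$, the elementary interpolation
\[
\|Q\|_p^p \;=\; \int_{[-1,1]^s}|Q|^{p-q}|Q|^q \;\le\; \|Q\|_\infty^{p-q}\|Q\|_q^q
\]
gives $\|Q\|_p \le \|Q\|_\infty^{1-q/p}\|Q\|_q^{q/p}$, so the problem reduces to the endpoint Nikolskii inequality
\[
\|Q\|_\infty \le c_2\, m^{2s/q}\,\|Q\|_q, \qquad Q \in \Pi_{2m}([-1,1]^s).
\]
Substituting this bound produces $\|Q\|_p \le c_2^{1-q/p} m^{2s(1/q - 1/p)}\|Q\|_q$, which is the claim with $c_1 = c_2$.

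The main obstacle is the endpoint inequality in $s$ variables. I would prove it by induction on $s$, taking as the base case the classical univariate Nikolskii inequality $\|P\|_\infty \le c(2m)^{2/q}\|P\|_q$ for $P \in \Pi_{2m}([-1,1])$, which is derivable from the reproducing kernel of $\Pi_{2m}$ in $L^2$ together with Christoffel-Darboux asymptotics for Legendre polynomials, or equivalently via the change of variable $x = \cos\theta$ and the trigonometric Nikolskii inequality, with careful bookkeeping of the Jacobian $\sin\theta$ near the endpoints $\pm 1$ (this is the source of the exponent $2/q$ rather than $1/q$). For the inductive step, pick $\mathbf{x}^\star = (x_1^\star,\ldots,x_s^\star)$ achieving $\|Q\|_\infty$; applying the univariate Nikolskii in the first variable with $(x_2^\star,\ldots,x_s^\star)$ fixed yields
\[
\|Q\|_\infty = |Q(\mathbf{x}^\star)| \le c\, m^{2/q}\left(\int_{-1}^{1}\bigl|Q(x_1,x_2^\star,\ldots,x_s^\star)\bigr|^q\,dx_1\right)^{1/q}.
\]
For each $x_1 \in [-1,1]$ the slice $Q(x_1,\cdot)$ lies in $\Pi_{2m}([-1,1]^{s-1})$, so the inductive hypothesis gives $|Q(x_1,x_2^\star,\ldots,x_s^\star)| \le \|Q(x_1,\cdot)\|_\infty \le c\, m^{2(s-1)/q}\|Q(x_1,\cdot)\|_{L^q([-1,1]^{s-1})}$. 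Raising to the $q$-th power, integrating in $x_1$, and applying Fubini recombines the estimates to $\|Q\|_\infty \le c^s\, m^{2s/q}\,\|Q\|_q$, closing the induction and yielding the lemma with $c_1 = c^s$ in the hard case. The hardest technical piece is thus the univariate base case, which I would cite from a standard reference such as \cite{Mhaskar97} rather than rederive from scratch.
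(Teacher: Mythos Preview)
Your argument is sound. The two-case split is the right move: H\"older dispatches $p\le q$ with constant $2^{s(1/p-1/q)}$; for $p>q$ the log-convexity bound $\|Q\|_p\le\|Q\|_\infty^{1-q/p}\|Q\|_q^{q/p}$ reduces everything to the endpoint $\|Q\|_\infty\le c_2 m^{2s/q}\|Q\|_q$, and your tensor-product induction on $s$ from the univariate algebraic Nikolskii inequality (with the correct exponent $2/q$ coming from the Jacobi weight at the endpoints) is the standard way to get that. One small gap to patch: the interpolation inequality is written for finite $p$, so you should remark that $p=\infty$ is exactly the endpoint case itself, needing no interpolation.

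As for comparison with the paper: there is nothing to compare. The paper does not prove Lemma~\ref{Lp} at all; it simply imports it verbatim as Lemma~3.1 of \cite{Mhaskar97}. Your write-up therefore goes well beyond the paper by supplying an actual derivation, reducing the external dependence from the full multivariate statement down to the classical one-dimensional Nikolskii inequality. That is a net gain in self-containment, at the cost of a paragraph of routine analysis.
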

	By the isometry of $\phi$, we have
	\begin{equation*}
		|\phi(V_mf)|_{\infty}\le |\phi(V_mf)|_2=||V_mf||_{2}.
	\end{equation*}
	Furthermore, by Lemma \ref{Lp} we have $||V_mf||_2\le c_1m^{2s\max\{\frac{1}{p}-\frac{1}{2},0\}}||V_mf||_p$. We know from \eqref{f}, (\ref{epsilon}) and (\ref{Vm}) that $||V_mf||_p\le ||f-V_mf||_p+||f||_p\le c\epsilon_{m,K}+c_K \le C_K:=\sup_{m\in\mathbb{N}}c\epsilon_{m,K}+c_K$. Denote
	\begin{equation*}
		R=R_{m,K}:=c_1C_Km^{2s\max\{\frac{1}{p}-\frac{1}{2},0\}},
	\end{equation*}
	then $\phi(V_mf)$ falls in the cube $[-R,R]^t$ for all $f\in K.$

	\begin{lemma} \label{omegamu}
		Let $\omega_{\mu_{m,F}}$ be the modulus of continuity of $\mu_{m,F}$, then
		\begin{equation*}
			\omega_{\mu_{m,F}}(r)\le \omega_F(c_1m^{2s\max\{\frac{1}{2}-\frac{1}{p},0\}}r), \quad\forall r>0,
		\end{equation*}
		where $c_1$ is the constant given in Lemma \ref{Lp}.
	\end{lemma}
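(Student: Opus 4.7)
The plan is to unwind the definition of $\omega_{\mu_{m,F}}$, transport a pair of vectors in $\mathbb{R}^t$ back to polynomials via $\phi^{-1}$, apply the norm comparison of Lemma \ref{Lp}, and then invoke the modulus of continuity of $F$.

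First I would fix any two vectors $\mathbf{y}_1,\mathbf{y}_2\in\mathbb{R}^t$ with $|\mathbf{y}_1-\mathbf{y}_2|_2\le r$ and set $Q_i:=\phi^{-1}(\mathbf{y}_i)\in\Pi_{2m}$. Since $\mu_{m,F}=F\circ\phi^{-1}$, I have $|\mu_{m,F}(\mathbf{y}_1)-\mu_{m,F}(\mathbf{y}_2)|=|F(Q_1)-F(Q_2)|$, and the defining property of the modulus of continuity of $F$ gives
\begin{equation*}
|F(Q_1)-F(Q_2)|\le \omega_F(\|Q_1-Q_2\|_p).
\end{equation*}
By linearity of $\phi^{-1}$, the difference $Q_1-Q_2=\phi^{-1}(\mathbf{y}_1-\mathbf{y}_2)$ lies in $\Pi_{2m}$, which is the key structural fact that unlocks the norm comparison.

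Next I would apply Lemma \ref{Lp} with the exponent pair $(p,q)=(p,2)$ to the polynomial $Q_1-Q_2$, which yields
\begin{equation*}
\|Q_1-Q_2\|_p\le c_1 m^{2s\max\{\frac{1}{2}-\frac{1}{p},0\}}\|Q_1-Q_2\|_2.
\end{equation*}
Using the isometric property of $\phi:(\Pi_{2m},\|\cdot\|_2)\to(\mathbb{R}^t,|\cdot|_2)$, I then replace $\|Q_1-Q_2\|_2$ by $|\mathbf{y}_1-\mathbf{y}_2|_2\le r$, so that $\|Q_1-Q_2\|_p\le c_1 m^{2s\max\{\frac{1}{2}-\frac{1}{p},0\}}r$.

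Finally, since $\omega_F$ is non-decreasing, I combine the two displayed inequalities to get $|\mu_{m,F}(\mathbf{y}_1)-\mu_{m,F}(\mathbf{y}_2)|\le \omega_F\bigl(c_1 m^{2s\max\{\frac{1}{2}-\frac{1}{p},0\}}r\bigr)$, and taking the supremum over all admissible pairs delivers the claimed bound on $\omega_{\mu_{m,F}}(r)$. No real obstacle is expected here: the argument is essentially a change of variables using the isometry $\phi$ plus the Nikol'skii-type inequality of Lemma \ref{Lp}; the only point that requires mild care is keeping the exponent $\max\{\frac{1}{2}-\frac{1}{p},0\}$ correct (it collapses to zero when $p\le 2$, reflecting that in that regime $\|\cdot\|_p\le C\|\cdot\|_2$ without a degree penalty).
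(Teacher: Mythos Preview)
Your argument is correct and essentially identical to the paper's own proof: both pull back to $\Pi_{2m}$ via $\phi^{-1}$, bound $\|\phi^{-1}\mathbf{y}_1-\phi^{-1}\mathbf{y}_2\|_p$ by $c_1 m^{2s\max\{\frac12-\frac1p,0\}}\|\phi^{-1}(\mathbf{y}_1-\mathbf{y}_2)\|_2$ using Lemma~\ref{Lp}, invoke the isometry of $\phi$, and feed the result into $\omega_F$. The only cosmetic difference is that you name the intermediate polynomials $Q_i$ and explicitly take the supremum at the end.
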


		\begin{figure}[htbp]
		\centering
		\subfigure[Triangulation in $\mathbb{R}^2$. Each triangle is called a simplex.]{
			\label{spike1}
			\includegraphics[width=0.4\textwidth]{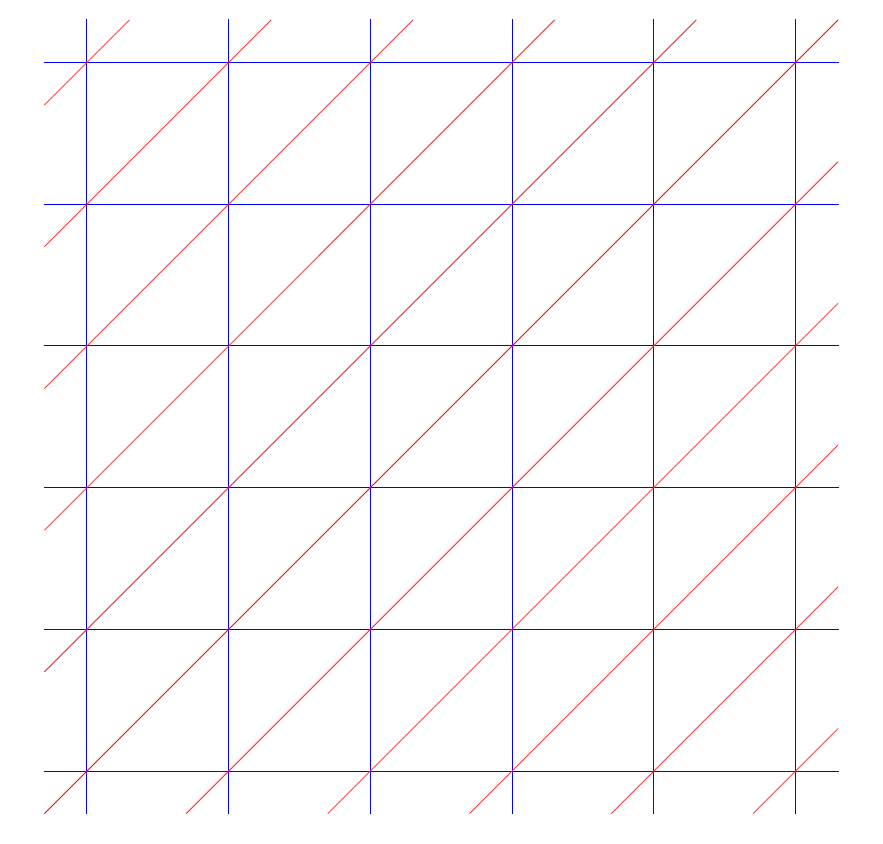}}\qquad
		\subfigure[$S_0$ in $\mathbb{R}^2$. The blue line segment is the boundary of $\partial S_0$, and the coordinate of red dot is $(0,0)$, where the ``spike'' function $\psi$ equals to 1.]{
			\label{spike2}
			\includegraphics[width=0.39\textwidth]{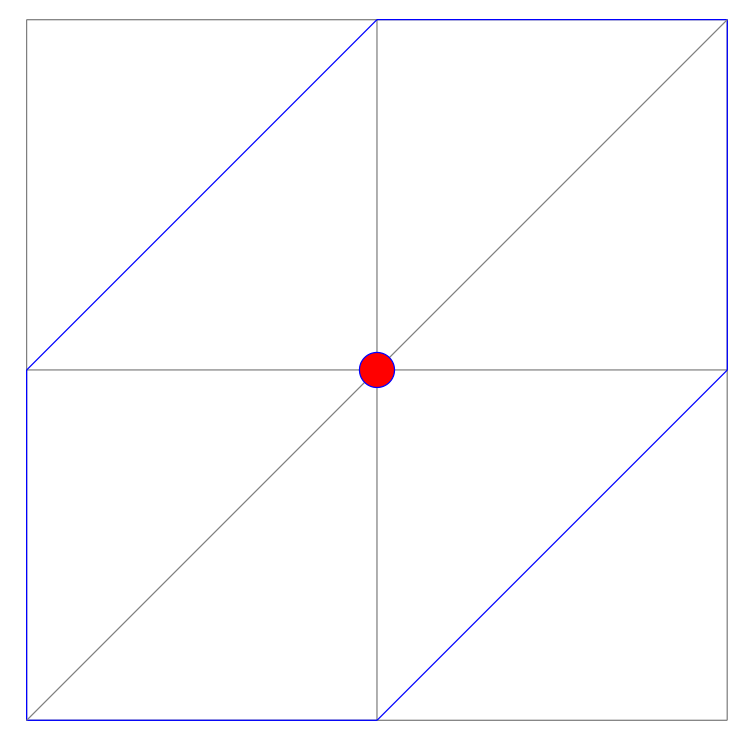}}
		\caption{Triangulation and $S_0$ in $\mathbb{R}^2$.}
		\label{spike}
	\end{figure}
	
	\begin{proof} By the definition of the modulus of continuity of $F$, Lemma \ref{Lp} and isometry of $\phi$, for any $\mathbf{y}_1, \mathbf{y}_2\in\mathbb{R}^t,$ we have
		\begin{equation*}
			\begin{aligned}
				|\mu_{m,F}(\mathbf{y}_1)-\mu_{m,F}(\mathbf{y}_2)|&=|F\left(\phi^{-1}\mathbf{y}_1\right)-F\left(\phi^{-1}\mathbf{y}_2\right)|\\
				&\le\omega_F\left(||\phi^{-1}\mathbf{y}_1-\phi^{-1}\mathbf{y}_2||_p\right)\\
				&\le\omega_F\left(c_1m^{2s\max\{\frac{1}{2}-\frac{1}{p},0\}}||\phi^{-1}(\mathbf{y}_1-\mathbf{y}_2)||_2\right)\\
				&=\omega_F\left(c_1m^{2s\max\{\frac{1}{2}-\frac{1}{p},0\}}|\mathbf{y}_1-\mathbf{y}_2|_2\right),
			\end{aligned}
		\end{equation*}
		which yields the desired conclusion.
	\end{proof}
	
	Once we know the modulus of continuity of $\mu_{m,F}$, we can
		construct a continuous piecewise linear interpolation under a simple triangulation to approximate it.
	
	We denote a simplex in $\mathbb{R}^t$ by
	\begin{equation*}
		\triangle_{\mathbf{0}}=\big\{\mathbf{y}=(y_1,\cdot\cdot\cdot,y_t)\in\mathbb{R}^t:0\le y_{1}\le\cdot\cdot\cdot\le y_{t}\le 1\big\}.
	\end{equation*}
	Then we shift $\triangle_{\mathbf{0}}$ by  $\mathbf{n}=(n_1,\cdot\cdot\cdot,n_t)\in\mathbb{Z}^t$, and permute the coordinates to get a new simplex 
	\begin{equation*}
		\triangle_{\mathbf{n},\rho}=\big\{\mathbf{y}\in\mathbb{R}^t:0\le y_{\rho(1)}-n_{\rho(1)}\le\cdot\cdot\cdot\le y_{\rho(t)}-n_{\rho(t)}\le 1\big\},
	\end{equation*}
	where $\rho\in \mathcal{P}_t$,  the set of all permutations of $t$ elements. According to Lemma \ref{lemma_partition} in Appendix, we know that $\left\{\triangle_{\mathbf{n},\rho}\right\}_{\mathbf{n}\in\mathbb{Z}^t,\rho\in\mathcal{P}_t}$ is a partition of $\mathbb{R}^t$. Therefore, dissecting the whole space $\mathbb{R}^t$ into small simplexes $\left\{\triangle_{\mathbf{n},\rho}\right\}_{\mathbf{n}\in\mathbb{Z}^t,\rho\in\mathcal{P}_t}$ can be viewed as a triangulation.  Figure \ref{spike1} shows this triangulation on $\mathbb{R}^2$.
	
	Denote $\mathbf{0}=(0,...,0)\in\mathbb{R}^t$, and it is easy to see that there exists a unique function $\psi:\mathbb{R}^t\rightarrow\mathbb{R}$ such that
	\begin{enumerate}[(a)]
		\item $\psi(\mathbf{0})=1$, and $\psi(\mathbf{y})=0$ for $\mathbf{y}\in \mathbb{Z}^t\setminus\{\mathbf{0}\}$;
		\item  $\psi$ is continuous in $\mathbb{R}^t$;
		\item $\psi$ is linear in each simplex $\triangle_{\mathbf{n},\rho}$ for $\mathbf{n}\in\mathbb{Z},\rho\in\mathcal{P}_t$.
	\end{enumerate}
	
	Denote by $S_0$ the support of $\psi$. From properties (a), (b) and (c), we know $S_0$ is the union of all simplexes that contains $\mathbf{0}$, that is,
		\begin{equation*}
			S_0=\bigcup_{\mathbf{0}\in\triangle_{\mathbf{n},\rho}}\triangle_{\mathbf{n},\rho}.
		\end{equation*}
		Figure \ref{spike2} shows $S_0$ in $\mathbb{R}^2,$ which contains 6 simplexes. According to Lemma \ref{lemma_convex} in Appendix, we know $S_0$ is a convex set, which implies an explicit representation of $\psi$ as follows by Lemma 3.1 in \cite{He2020}:
		\begin{equation*}
			\psi(\mathbf{y})=\sigma\left(\min_{\triangle\in \mathcal{T}}\{h_{\triangle}(\mathbf{y})\}\right),
		\end{equation*}
		where $\mathcal{T}=\{\triangle_{\mathbf{n},\rho}:\mathbf{0}\in\triangle_{\mathbf{n},\rho} \}$ and $h_{\triangle}$ is the global linear function such that $h_{\triangle}=\psi$ on the simplex $\triangle$. With the help of Lemma \ref{lemma_affine} in Appendix, we know that $h_{\triangle}(\mathbf{y})$ is either of the form $1+y_k-y_j$, $k\neq j$ or $1\pm y_k$,  hence
	\begin{equation}\label{14}
		\psi(\mathbf{y})=\sigma\Big(\min\big\{\min_{k\neq j}(1+y_k-y_j),\min_{k}(1+y_k),\min_{k}(1-y_k)\big\}\Big).
	\end{equation}
	We illustrate the function $\psi$ in $\mathbb{R}^2$ by Figure \ref{spike3}.

	\begin{figure}[htp]
	{
		\centering
		\includegraphics[width=0.6\textwidth]{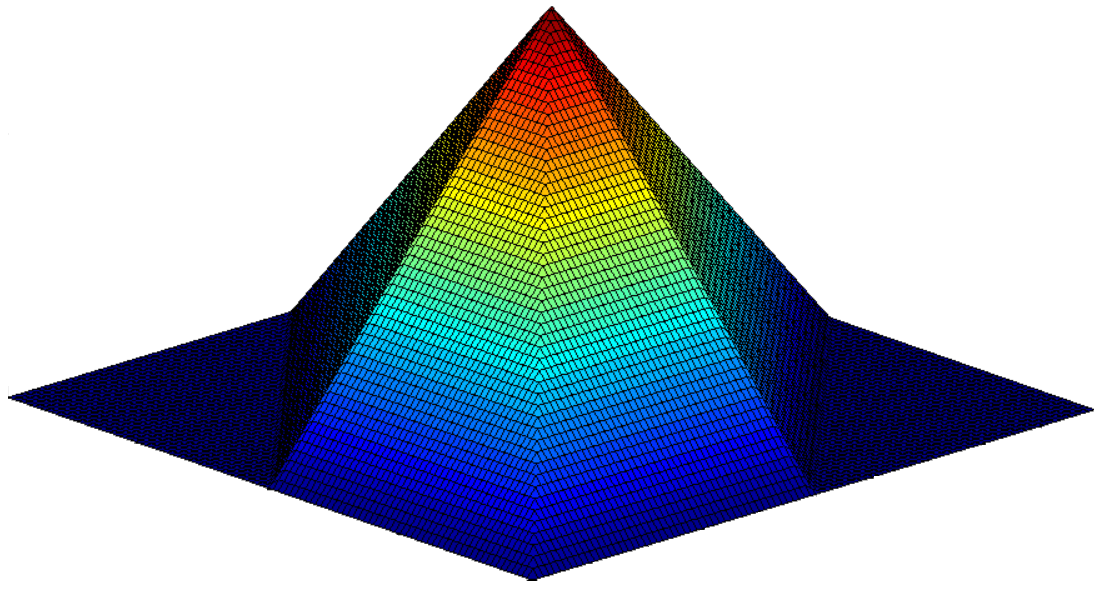}
		\caption{Spike function $\psi$ on $\mathbb{R}^2.$}
		\label{spike3}
	}
\end{figure}

		\begin{lemma} \label{nonzero}
		Let $\mathbf{x}=(x_1,\cdot\cdot\cdot,x_d)\in \mathbb{R}^d$, then function $\min(\mathbf{x})=\min(x_1,\cdot\cdot\cdot,x_d)$ can be seen as a ReLU neural network with $d-1$ hidden layers and $d^2+4d-5$ nonzero weights.
	\end{lemma}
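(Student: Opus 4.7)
The key identity I will use is $\min(a,b) = b - \sigma(b-a)$. Iterating this with $m_k := \min(x_1,\ldots,x_k)$ gives $m_k = x_k - w_k$ where $w_k := \sigma(x_k - m_{k-1})$, and substituting $m_{k-1} = x_{k-1} - w_{k-1}$ yields the three-term recurrence
\[
w_k = \sigma(x_k - x_{k-1} + w_{k-1}), \qquad k \ge 3,
\]
with $w_2 = \sigma(x_2 - x_1)$. Crucially, each $w_k$ depends on a bounded number of quantities, independent of $k$.

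The plan is to build a ReLU network whose $\ell$-th hidden layer ($1\le \ell\le d-1$) holds exactly the $2(d-\ell)+1$ units $w_{\ell+1}$ together with $\sigma(x_j),\sigma(-x_j)$ for $j=\ell+1,\ldots,d$. Propagation between layers uses the identities $\sigma(\sigma(\pm x_j))=\sigma(\pm x_j)$ and $\sigma(w_\ell)=w_\ell$ (one nonzero weight per copy), while the new unit is formed from the layer-$\ell$ representation by
\[
w_{\ell+1} = \sigma\bigl(\sigma(x_{\ell+1}) - \sigma(-x_{\ell+1}) - \sigma(x_\ell) + \sigma(-x_\ell) + w_\ell\bigr)
\]
(five incoming nonzero weights). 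The network output is obtained by the linear readout $\mathbf{a}=(1,-1,-1)'$ applied to the three units in the final hidden layer, giving $\sigma(x_d) - \sigma(-x_d) - w_d = x_d - w_d = m_d$.

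For the weight count I choose all bias vectors to be zero. Then $\|W_1\|_0 = 2 + 2(d-1) = 2d$, where the first $2$ is for $w_2 = \sigma(x_2-x_1)$ and each pair $\sigma(\pm x_j)$, $j\ge 2$, contributes $2$. For $2\le \ell\le d-1$, $\|W_\ell\|_0 = 5 + 2(d-\ell)$, namely five weights for $w_{\ell+1}$ and one per propagated unit. Finally $\|\mathbf{a}\|_0 = 3$. Summing,
\[
2d + \sum_{\ell=2}^{d-1}\bigl(5 + 2(d-\ell)\bigr) + 3 = 2d + 5(d-2) + (d-1)(d-2) + 3 = d^2 + 4d - 5.
\]

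The only real obstacle is choosing the right recursion rather than executing any delicate estimate. A naive scheme $w_k = \sigma(m_{k-1} - x_k)$ expanded around a fixed $x_1$ anchor forces each $w_k$ to carry dependences on all previously computed $w_j$, inflating the per-layer cost of $w_{\ell+1}$ to $\Theta(\ell)$ incoming weights and producing a worse leading coefficient in front of $d^2$. Re-anchoring at $x_k$ via $m_{k-1} = x_{k-1} - w_{k-1}$ is precisely what keeps the cost of forming each $w_{\ell+1}$ at the constant value $5$ and nails the claimed count $d^2+4d-5$ exactly.
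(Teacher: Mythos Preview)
Your proof is correct and produces the same network as the paper's construction; the paper proceeds by induction on $d$ using $\min(x_1,\ldots,x_d)=x_d-\sigma(x_d-\min(x_1,\ldots,x_{d-1}))$ together with the identity carrier $\mathcal{A}_J(x)=(\sigma(x),\sigma(-x))'$, whereas you unroll this recursion explicitly via the three-term relation $w_k=\sigma(x_k-x_{k-1}+w_{k-1})$ and describe each layer's contents directly. The underlying identities and the resulting weight count are identical.
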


	\begin{proof}
		Recall that $\sigma(u)-\sigma(-u)=u$ for $u\in\mathbb{R}$. We prove the statement by induction on $d$. The case $d=2$ is easy because 
		$\min(x_1,x_2)=x_2-\sigma(x_2-x_1)=\sigma(x_2)-\sigma(-x_2)-\sigma(x_2-x_1)$ can be seen as a ReLU net with 1 hidden layer and 7 nonzero weights. We assume that $\min\{x_1,...,x_k\}$ can be seen as a ReLU net with $k-1$ hidden layers and $k^2+4k-5$ nonzero weights for any $2\le k\le d-1$. Then $\min(x_1,...,x_d)=\min(\min(x_1,...,x_{d-1}),x_d)=x_d-\sigma(x_d-\min\{x_1,...,x_{d-1}\})$, which has $1+(d-2)=d-1$ hidden layers since $\min\{x_1,...,x_{d-1}\}$ has $d-2$ hidden layers by our induction hypothesis. Let $J\in\mathbb{N}$, and
		denote $\mathcal{A}_{J}:\mathbb{R}\rightarrow\mathbb{R}^2$ by
		\begin{equation*}
			\mathcal{A}_J(x)=\sigma_{\mathbf{0}}W_J\cdots\sigma_{\mathbf{0}}W_2\sigma_{\mathbf{0}} W_1x,\quad x\in\mathbb{R},
		\end{equation*}
		where $W_1=(1,-1)'$, $W_j=I_2$ is the $2\times 2$ identity matrix for $j=2,\cdot\cdot\cdot,J$. Then $\mathcal{A}_J(x)=(\sigma(x),\sigma(-x))'$ for any $J\in \mathbb{N}$. It has $J$ hidden layers and $2J$ nonzero weights. Therefore we have
		\begin{equation*}
			\min\{x_1,...,x_d\}=W_1'\sigma_{\mathbf{0}}I_2\mathcal{A}_{d-2}(x_d)-\sigma(W_1'\mathcal{A}_{d-2}(x_d)-\min\{x_1,...,x_{d-1}\}),
		\end{equation*}
		which is a ReLU net with number of nonzero weights
		\begin{equation*}
			M_1+M_2+2||W_1||_0+||I_2||_0+1=d^2+4d-5,
		\end{equation*}
		where $M_1=2(d-2)$ is the number of nonzero weights of $\mathcal{A}_{d-2}$, and $M_2=(d-1)^2+4(d-1)-5$ is the number of nonzero weights of $\min\{x_1,...,x_{d-1}\}$ by our induction hypothesis. This completes the induction procedure and proves the lemma. 
	\end{proof}
	Figure \ref{min} illustrates how to express $\min\{x_1,x_2,x_3\}$ as a ReLU network.
		\begin{figure}[htp]
		{
			\centering
			\includegraphics[width=0.7\textwidth]{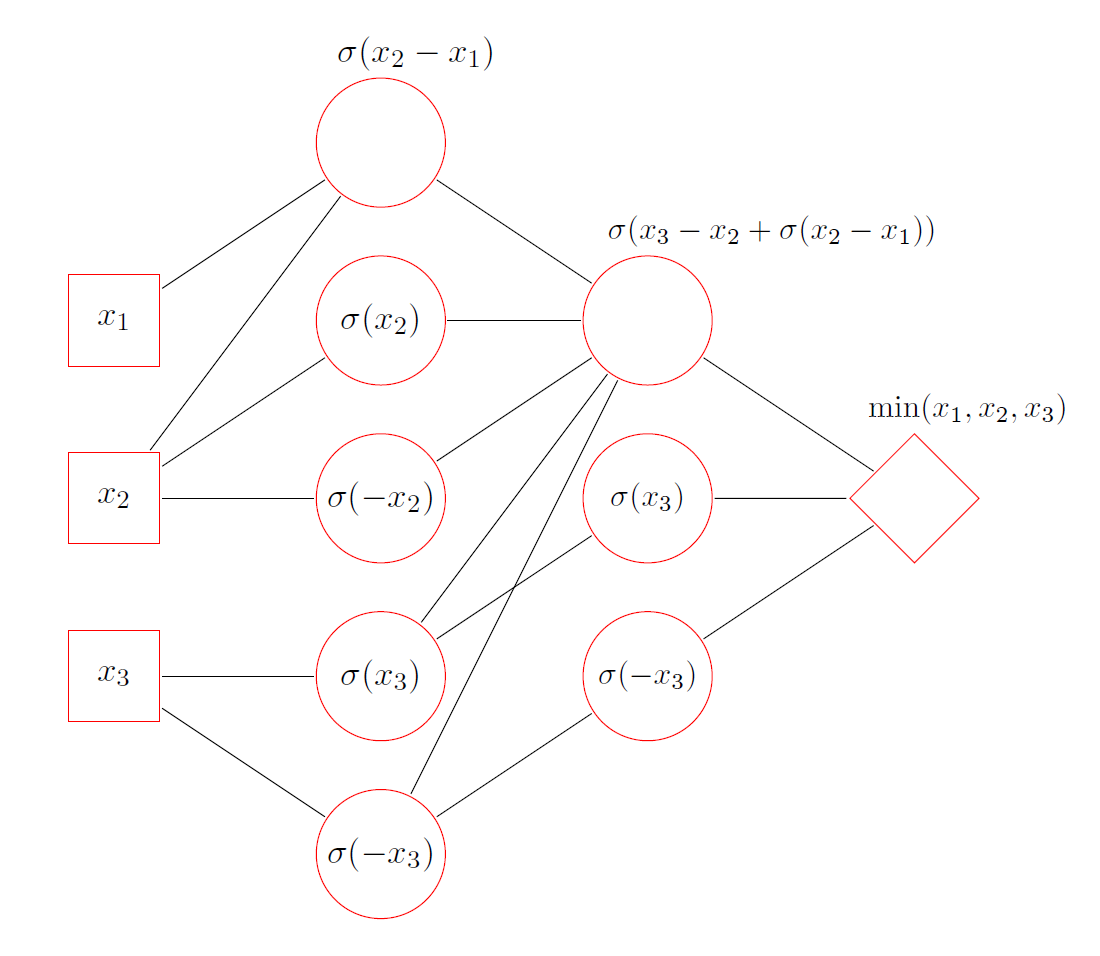}
			\caption{$\min\{x_1,x_2,x_3\}$ can be seen as a RuLU netwrok. This network has 16 connections and all threshold parameters are zero, hence it has 16 nonzero parameters. }
			\label{min}
		}
	\end{figure}
	
	\begin{proposition} \label{result3}
		Let $s,m,M\in\mathbb{N}$, $1\le p\le \infty$ and set $t=(2m+1)^s$. If $F:L^p([-1,1]^s)\rightarrow\mathbb{R}$ is a continuous functional with modulus of continuity $\omega_F$, then there exists a deep ReLU network $H:\mathbb{R}^t\rightarrow\mathbb{R}$ with depth $J=t^2+t+1$ and number of  nonzero weights $M$ such that
		\begin{equation*}
			\sup_{\mathbf{y}\in [-R,R]^t}|\mu_{m,F}(\mathbf{y})-H(\mathbf{y})|\le 2(2m+1)^s\omega_{F}\left(\frac{c_5t^{\frac{4}{t}}m^{\theta}}{M^{\frac{1}{t}}}\right),
		\end{equation*}
		where $c_5$ is a constant independent of $m, M$ or $\theta=2s|\frac{1}{p}-\frac{1}{2}|$.
	\end{proposition}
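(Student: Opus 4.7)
I would build $H$ as the continuous piecewise linear interpolant of $\mu_{m,F}$ on a uniformly rescaled Coxeter/Kuhn triangulation of the cube $[-R,R]^t$, whose nodal ``hat'' basis is precisely the translates of the spike function $\psi$ from \eqref{14}. Because each translated spike is already a ReLU sub-network of depth $t^2+t+1$, the interpolant is automatically a ReLU network of the required depth; a routine weight count then lets one trade the grid resolution for the parameter budget $M$. The error, naturally expressed in terms of $\omega_{\mu_{m,F}}$, is converted to $\omega_F$ using Lemma~\ref{omegamu}.

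\textbf{Construction, depth and weight count.} Pick a resolution $N\in\mathbb{N}$ and, for each $\mathbf{n}\in\{0,1,\ldots,N\}^t$, let $\mathbf{y}_{\mathbf{n}}=(2R/N)\mathbf{n}-R\mathbf{1}\in[-R,R]^t$. Define
\[
H(\mathbf{y})=\sum_{\mathbf{n}}\mu_{m,F}(\mathbf{y}_{\mathbf{n}})\,\psi\!\left(\tfrac{N}{2R}(\mathbf{y}+R\mathbf{1})-\mathbf{n}\right).
\]
By \eqref{14}, each translated spike is $\sigma$ applied to the $\min$ of $d:=t^2+t$ affine functions of $\mathbf{y}$. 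One hidden layer produces all $d$ affine expressions (with the $\mathbf{n}$-shifts absorbed into the thresholds), $d-1$ further hidden layers compute the $\min$ via Lemma~\ref{nonzero}, and one final hidden layer applies the outer $\sigma$, giving total depth $1+(d-1)+1=t^2+t+1$. All $(N+1)^t$ copies run in parallel and are combined linearly at the output using the scalars $\mu_{m,F}(\mathbf{y}_{\mathbf{n}})$. By Lemma~\ref{nonzero} each spike contributes at most $d^2+4d-5=O(t^4)$ nonzero weights inside its $\min$ block, plus $O(t^2)$ in the affine pre-layer and $O(1)$ for the outer ReLU and the output coefficient, so the total count is bounded by $c_{*}t^{4}(N+1)^{t}$. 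Padding with zero weights if necessary, I would take $(N+1)^{t}=\lfloor M/(c_{*}t^{4})\rfloor$, giving $1/(N+1)\le c'\,t^{4/t}/M^{1/t}$.

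\textbf{Error estimate.} On the Kuhn triangulation the translates of $\psi$ form a partition of unity: for every $\mathbf{y}$ the values $\psi(\tfrac{N}{2R}(\mathbf{y}+R\mathbf{1})-\mathbf{n})$ are precisely the barycentric coordinates of $\mathbf{y}$ with respect to the unique simplex of the rescaled triangulation containing it, so $H(\mathbf{y})$ is a convex combination of values $\mu_{m,F}(\mathbf{y}_{\mathbf{n}})$ at $t+1$ vertices, each within Euclidean distance $2R\sqrt{t}/N$ of $\mathbf{y}$. Hence
\[
|\mu_{m,F}(\mathbf{y})-H(\mathbf{y})|\le \omega_{\mu_{m,F}}\!\left(\tfrac{2R\sqrt{t}}{N}\right)\le \omega_{F}\!\left(2c_{1}^{2}C_{K}\,m^{\theta}\,\sqrt{t}/N\right),
\]
where the second inequality uses Lemma~\ref{omegamu} together with $R=c_{1}C_{K}m^{2s\max\{1/p-1/2,0\}}$ so that the two $m$-powers combine into $m^{\theta}$. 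Invoking subadditivity of $\omega_{F}$ to absorb the $\sqrt{t}$ factor (any integer $k\ge\sqrt{t}$ gives $\omega_{F}(\sqrt{t}u)\le k\,\omega_{F}(u)$, and $k\le 2t=2(2m+1)^{s}$ suffices for $t\ge 1$), and then substituting the bound $1/(N+1)\le c'\,t^{4/t}/M^{1/t}$ from the previous step, yields the desired inequality with a constant $c_{5}$ depending only on $p,s,c_{K}$.

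\textbf{Main obstacle.} The only genuinely non-mechanical step is establishing the partition-of-unity / barycentric interpretation of the shifted spikes on the Kuhn triangulation; this is what collapses the pointwise error from a sum of $t+1$ modulus terms into the single $\omega_{\mu_{m,F}}(2R\sqrt{t}/N)$ estimate above. Once that geometric fact is in hand, the remaining work is bookkeeping: verifying the depth exactly equals $t^{2}+t+1$, counting nonzero parameters through Lemma~\ref{nonzero}, and using subadditivity of $\omega_{F}$ to push the geometric factors $\sqrt{t},R,c_{1},C_{K}$ into the form $2(2m+1)^{s}\,\omega_{F}(c_{5}t^{4/t}m^{\theta}/M^{1/t})$ required by the statement.
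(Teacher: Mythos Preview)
Your proposal is correct and follows the same overall architecture as the paper: the same interpolant $H$ built from translates of the spike $\psi$, the same depth accounting ($1$ layer for the $t^2+t$ affine expressions, $t^2+t-1$ layers for the $\min$ via Lemma~\ref{nonzero}, $1$ layer for the outer $\sigma$), and the same weight bound $M\le c\,t^4(N+1)^t$.

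The one substantive difference is in the error estimate. You use the partition-of-unity/barycentric interpretation of the shifted spikes to write $H(\mathbf{y})$ as a convex combination of the nodal values and immediately obtain $|\mu_{m,F}(\mathbf{y})-H(\mathbf{y})|\le \omega_{\mu_{m,F}}(2R\sqrt{t}/N)$; you then extract the factor $\sqrt{t}$ by subadditivity of $\omega_F$ after applying Lemma~\ref{omegamu}. The paper instead restricts $H$ to the simplex containing $\mathbf{y}$, bounds the gradient of this linear piece componentwise by $(N/2R)\,\omega_{\mu}(2R/N)$, and thereby controls $\omega_{\tilde H}$ in terms of $\omega_{\mu}$; the error is then split as $\omega_{\mu}(\sqrt{t}R/N)+\omega_{\tilde H}(\sqrt{t}R/N)\le 2t\,\omega_{\mu}(2R/N)$ using subadditivity of $\omega_{\mu}$ \emph{before} invoking Lemma~\ref{omegamu}. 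Both routes arrive at the same bound $2(2m+1)^s\omega_F(c_5 t^{4/t}m^{\theta}/M^{1/t})$; yours is a bit shorter and avoids the gradient computation, while the paper's version never needs to name the barycentric/partition-of-unity property explicitly.
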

	
	\begin{proof}
		The proof can be divided into three steps.
		\begin{itemize}
			\item Step 1. We construct a continuous piecewise linear interpolation of $\mu_{m,F}$.
			
			The construction is motivated by \cite{Yarotsky18}. 	We consider the grid
			\begin{equation*}
				\mathcal{G}=\left\{-R+\frac{2R}{N}i:i=0,\cdot\cdot\cdot,N\right\}^t
			\end{equation*}
			on the cube $[-R,R]^t$, and we denote
			\begin{equation}\label{scaled_triangle}
				\triangle_{\mathbf{n},\rho}^{N}=\left\{\mathbf{y}\in\mathbb{R}^t:0\le y_{\rho(1)}-\frac{2Rn_{\rho(1)}}{N}\le\cdot\cdot\cdot\le y_{\rho(t)}-\frac{2Rn_{\rho(t)}}{N}\le\frac{2R}{N}\right\},
			\end{equation}
			for $\mathbf{n}=(n_1,\cdot\cdot\cdot,n_t)\in\mathbb{Z}^t$, $\mathbf{y}=(y_1,\cdot\cdot\cdot,y_t)$ and $\rho\in\mathcal{P}_t$. By scaling the grid and using Lemma \ref{lemma_partition} in Appendix, we know $\left\{\triangle^N_{\mathbf{n},\rho}\right\}_{\mathbf{n}\in\mathbb{Z}^t,\rho\in\mathcal{P}_t}$ is a partition of $\mathbb{R}^t$. Now we define the piecewise linear interpolant $H:\mathbb{R}^t\rightarrow\mathbb{R}$ as
			\begin{equation*}
				H(\mathbf{y})=\sum\limits_{\xi\in \mathcal{G}}\mu_{m,F}(\xi)\psi\left(\frac{N}{2R}(\mathbf{y}-\xi)\right).
			\end{equation*}
			According to properties (a), (b) and (c) satisfied by $\psi$, we know $H$ is continuous on $\mathbb{R}^t$, linear on every simplex $\triangle_{\mathbf{n},\rho}^N$ and interpolates $\mu_{m,F}$ at every $\xi\in\mathcal{G}$.
			
			\item Step 2. We show that $H$ can be viewed as a deep ReLU net and we calculate depth and number of nonzero weights of $H$.
			
			Recall  the expression \eqref{14} for $\psi$ and fix $\xi\in\mathcal{G}$.
			We concatenate elements in $\{1-\frac{N}{2R}\xi_k+\frac{N}{2R}y_k\}_{k=1}^t$, $\{1+\frac{N}{2R}\xi_k-\frac{N}{2R}y_k\}_{k=1}^t$ and $\{1+\frac{N}{2R}(y_k-y_j)\}_{k\neq j, k,j=1,...,t}$
				into a vector $\mathbf{a}=(a_1,...,a_{t^2+t})\in\mathbb{R}^{t^2+t}$, then by \eqref{14}, we know
				\begin{equation*}
					\begin{aligned}
						\psi\left(\frac{N}{2R}(\mathbf{y}-\xi)\right)&=\sigma\left(\min\{a_i:i=1,...,t^2+t\}\right)\\
						&=\sigma\left(\min\{\sigma(a_i):i=1,...,t^2+t\}\right).
					\end{aligned}
			\end{equation*}
			The last equality above is obtained by discussing the two cases of $\min_{i}\{a_i\}< 0$ and $\min_{i}\{a_i\}\ge 0$.
			According to Lemma \ref{nonzero}, we know $\min\{\sigma(a_i):i=1,...,t^2+t\}$ is actually a ReLU network
			with depth $t^2+t-1$ and number of nonzero weights  $(t^2+t)^2+4(t^2+t)-5$ when $(\sigma(a_i))_i$ is viewed as the input vector. Moreover, it requires $1$ hidden layer and $3t(t-1)+4t$ nonzero weights from $\mathbf{y}$ to $\sigma(\mathbf{a})$, therefore $H$ can be viewed as a ReLU network with depth $J=t^2+t-1+2=t^2+t+1$  and number of nonzero weights
			\begin{equation}\label{boundM}
				M\le c_3t^4(N+1)^t,
			\end{equation}
			for some absolute constant $c_3$. 
			
			\item Step 3. We approximate $\mu_{m,F}$ by $H$ and estimate the approximation error.
			
			For any $\mathbf{y}\in[-R,R]^t$, we know that there exists at least one simplex given in \eqref{scaled_triangle} that contains $\mathbf{y}$, and we denote this simplex by $\triangle$. Moreover, we denote by $\tilde{H}=H|_{\triangle}$ the restriction of $H$ to $\triangle$.
			The modulus of continuity of $\tilde{H}$ can be bounded by that of $\mu_{m,F}$ due to the piecewise linearity of $H$. Actually, if we use $\mu$ instead of $\mu_{m,F}$ for simplicity, then for $r>0$,
			\begin{equation*}
				\begin{aligned}
					\omega_{\tilde{H}}(r)&=\sup\big\{|\tilde{H}(\mathbf{y}_1)-\tilde{H}(\mathbf{y}_2)|:|\mathbf{y}_1-\mathbf{y}_2|_2\le r,\mathbf{y}_1,\mathbf{y}_2\in\triangle\big\}\\
					&=\sup\big\{|{\nabla \tilde{H}(\mathbf{y}_2)}^{T}(\mathbf{y}_1-\mathbf{y}_2)) |:|\mathbf{y}_1-\mathbf{y}_2|_2\le r,\mathbf{y}_1,\mathbf{y}_2\in\triangle\big\}\\
					&\le \sup_{\mathbf{y}_2\in\triangle}|\nabla \tilde{H}(\mathbf{y}_2)|_2r \le \sqrt{t}\sup_{\mathbf{y}_2\in\triangle}|\nabla \tilde{H}(\mathbf{y}_2)|_{\infty}r,
				\end{aligned}
			\end{equation*}
			where $\nabla=(\partial_1,...,\partial_t)$ is the gradient operator. Since $\tilde{H}$ is linear and coincides with $\mu$ on every node in $\triangle$, we have $\partial_k \tilde{H}(\mathbf{y}_2)=\frac{N}{2R}(\mu(\xi_k)-\mu(\eta_k))\le \frac{N}{2R}\omega_{\mu}\left(\frac{2R}{N}\right)$ for $k=1,...,t$,
			 where $\xi_k,\eta_k$ are the vertices of $\triangle$ having the same coordinates except the $k$-th coordinate, hence $|\xi_k-\eta_k|=\frac{2R}{N}$.  Therefore, we have
			\begin{equation} \label{omegeH}
				\omega_{\tilde{H}}(r)\le \frac{\sqrt{t}N}{2R} \omega_{\mu}\left(\frac{2R}{N}\right)r.
			\end{equation}
			
			Let $e(\mathbf{y})=\mu(\mathbf{y})-H(\mathbf{y})$, denote by $\mathbf{y}^*$ a nearest vertex of $\triangle$ to $\mathbf{y}$. Note that $e(\mathbf{y}^*)=0$ and $|\mathbf{y}-\mathbf{y}^*|\le \frac{\sqrt{t}R}{N}$, then
			\begin{equation*}
				\begin{aligned}
					|e(\mathbf{y})|&=|e(\mathbf{y})-e(\mathbf{y}^*)|\le \omega_e(|\mathbf{y}-\mathbf{y}^*|_2)\\
					&\le \omega_e\left(\frac{\sqrt{t}R}{N}\right)\le \omega_{\mu}\left(\frac{\sqrt{t}R}{N}\right)+\omega_{\tilde{H}}\left(\frac{\sqrt{t}R}{N}\right).
				\end{aligned}
			\end{equation*}
			Observe that the integer part $\lfloor \sqrt{t}\rfloor$ of $\sqrt{t}$ is no less then $\sqrt{t}/2$. Then by the sub-additivity of the modulus of continuity, we have
			\begin{equation*}
				\begin{aligned}
					\omega_{\mu}\left(\frac{\sqrt{t}R}{N}\right)&=\omega_{\mu}\left(\frac{\sqrt{t}}{2}\frac{2R}{N}\right)
					\le 	\omega_{\mu}\left(\lfloor \sqrt{t}\rfloor\frac{2R}{N}\right)\\
					&\le \lfloor\sqrt{t}\rfloor\omega_{\mu}\left(\frac{2R}{N}\right)\le \sqrt{t}\omega_{\mu}\left(\frac{2R}{N}\right).
				\end{aligned}
			\end{equation*}
			This together with (\ref{omegeH}) leads to
			\begin{equation*}
				\begin{aligned}
					\sup_{\mathbf{y}\in [-R,R]^t}|\mu_{m,F}(\mathbf{y})-H(\mathbf{y})|&\le
					\sqrt{t}\omega_{\mu}\left(\frac{2R}{N}\right)+\frac{t}{2}\omega_{\mu}\left(\frac{2R}{N}\right)\\
					&\le 2t\omega_{\mu}\left(\frac{2R}{N}\right)\\
					&\le 2(2m+1)^s\omega_{F}\left(\frac{c_2m^{\theta}}{N}\right),
				\end{aligned}
			\end{equation*}
			where the last inequality is due to Lemma \ref{omegamu}, $c_2=2c_1^2C_K$ is a constant and $\theta=2s|\frac{1}{p}-\frac{1}{2}|$. From (\ref{boundM}), we know $N\ge c_4M^{\frac{1}{t}}t^{-\frac{4}{t}}$ for some constant $c_4$, then
			\begin{equation*}
				\sup_{\mathbf{y}\in [-R,R]^t}|\mu_{m,F}(\mathbf{y})-H(\mathbf{y})|\le 2(2m+1)^s\omega_{F}\left(\frac{c_5t^{\frac{4}{t}}m^{\theta}}{M^{\frac{1}{t}}}\right)
			\end{equation*}
			with $c_5=c_2/c_4$. 
		\end{itemize}
		All the three steps lead to the desired conclusion. 
	\end{proof}
	Another natural approach to approximate the function $\mu_{m,F}$ on $[-R,R]^t$ is to first apply the classical result \cite[Theorem 3.2]{Mhaskar93} on approximation by a network $\tilde{F}$ of depth $\lceil\log t/\log 2\rceil+1$ induced by a sigmoid type activation function like $\sigma_2(u)=(\max\{u,0\})^2$ and then approximate $\sigma_2$ by a ReLU net. However, this approach raises a technical barrier: while $\sigma_2$ may be approximated to an arbitrary accuracy by a ReLU network of a fixed width, it can be approximated to an accuracy $\epsilon>0$ by a ReLU net only if the net has $O(\log \frac{1}{\epsilon})$ depth and $O(\frac{1}{\epsilon}\log \frac{1}{\epsilon})$ width, as shown in \cite{Yarotsky18}. The increasing depth and width as $\epsilon\rightarrow 0$ lead to more complicated estimates involving the parameter sizes of the network $\tilde{F}$ and much more work for getting rates of approximation as $t$ increases. It would be interesting to carry out the complete analysis for this approach.

	\subsection{Proof of main results}
	Now we are in the position to prove Theorems \ref{result1} and \ref{corro} stated in Section 3.
	\begin{proof}[Proof of Theorem \ref{result1}]
		From \eqref{phi}, we know $H(\phi(V_mf))$ can be seen as a parametric functional neural network in \eqref{parametric}. In this case $d_0=t$. Denote $\Theta_v(f):=H(\phi(V_mf)).$ The numerical weights in $\Theta_v$ are the same as those in $H$, therefore $M(\Theta_v)=M.$
		
		Combining Propositions \ref{result2} and \ref{result3}, we have
		\begin{equation*}
			\begin{aligned}
				\sup_{f\in K}|F(f)-\Theta_v(f)|&\le \sup_{f\in K}|F(f)-F(V_mf)|+\sup_{f\in K}|\mu_{m,F}(\phi(V_mf))-H(\phi(V_mf))|\\
				&\le  \sup_{f\in K}|F(f)-F(V_mf)|+ \sup_{\mathbf{y}\in [-R,R]^t}|\mu_{m,F}(\mathbf{y})-H(\mathbf{y})|\\
				&\le \omega_F(c\epsilon_{m,K}) +2(2m+1)^s\omega_{F}\left(\frac{c_5t^{\frac{4}{t}}m^{\theta}}{M^{\frac{1}{t}}}\right),
			\end{aligned}
		\end{equation*}
		which completes the proof of Theorem \ref{result1} with $C=\max\{c,c_5\}$.
	\end{proof}
	
	\begin{proof} [Proof of Theorem \ref{corro}]
		Since $K$ is a set of functions satisfying a H\"older condition of order $\beta$, we have $\epsilon_{m,K}=c_7m^{-\beta}$ for some constant $c_7$ as stated in Remark \ref{remark}. Then from Theorem \ref{result1} we have
		\begin{equation}\label{rate}
			\sup_{f\in K}|F(f)-\Theta_v(f)|\le c_6\left(Cc_7m^{-\beta}\right)^{\lambda}+2c_6(2m+1)^s\left(\frac{Cm^{\theta}t^{\frac{4}{t}}}{M^{\frac{1}{t}}}\right)^{\lambda}.
		\end{equation}
	Note that $(2m)^s\le t=(2m+1)^s\le (3m)^s$. We can simplify the bound in \eqref{rate} to the form
	\begin{equation*}
		\sup_{f\in K}|F(f)-\Theta_v(f)|\le c_8\left(m^{-\beta\lambda}+m^{s+\theta\lambda}\left(\frac{(3m)^{4s}}{M}\right)^{\frac{\lambda}{(2m)^s}}\right),
	\end{equation*}
	where $c_8=c_6(Cc_7)^{\lambda}+2\cdot3^sc_6(C)^{\lambda}$. To find a good choice of $m$, we try to balance the two terms of the above bound, and compare $m^{-(\beta\lambda+s+\theta\lambda)}$ with $\left(\frac{(3m)^{4s}}{M}\right)^{\frac{\lambda}{(2m)^s}}$. Taking logarithms yields $-(\beta\lambda+s+\theta\lambda)\log m$ and $\frac{\lambda}{(2m)^s}\left(4s\log(3m)-\log M\right)$. When $4s\log(3m)\ll\log M$, we can compare $-(\beta\lambda+s+\theta\lambda)\log m$ with $-\frac{\lambda}{(2m)^s}\log M$. Therefore, we choose $m$ to be the integer such that
	\begin{equation}\label{zhou_revise}
		c_9m^s\log (3m)\le \log M <c_9(m+1)^s\log (3(m+1)),
	\end{equation}
	where  $c_9=\left(8s+2^{1+s}(s/\lambda+\theta+\beta)\right)$.
	This integer exists when $\log M\ge 2c_9\log 3$. Under this choice, $4s\log(3m)\le \frac{1}{2}\log M$ and thereby, $\frac{\lambda}{(2m)^s}\left(4s\log(3m)-\log M\right)\le -\frac{1}{2}\frac{\lambda}{(2m)^s} \log M$, which implies
	\begin{equation*}
		\begin{aligned}
			m^{s+\theta\lambda}\left(\frac{(3m)^{4s}}{M}\right)^{\frac{\lambda}{(2m)^s}}&\le \exp\left\{(s+\theta\lambda)\log m-\frac{1}{2}\frac{\lambda}{(2m)^s}\log M\right\}\\
			&\le \exp\left\{-\beta\lambda\log m\right\}= m^{-\beta\lambda}.
		\end{aligned}
	\end{equation*}
	It follows that
	\begin{equation*}
		\sup_{f\in K}|F(f)-\Theta_v(f)|\le 2c_8m^{-\beta\lambda}. 
	\end{equation*}
	From \eqref{zhou_revise}, we find $m^s\le \frac{\log M}{c_9\log(3m)}\le \frac{\log M}{c_9\log 3}\le \log M$, which implies $\log (3(m+1))\le 6\log(m)\le \frac{6}{s}\log(\log M)$. It then follows  that
	\begin{equation*}
		\begin{aligned}
			\log M&<c_9(m+1)^s\log (3(m+1))\\
			&\le \frac{6c_9}{s}(m+1)^s\log(\log M)\\
			&\le  \frac{6c_92^s}{s}m^s\log(\log M).
		\end{aligned}
	\end{equation*}
	Therefore
	\begin{equation*}
			\sup_{f\in K}|F(f)-\Theta_v(f)|\le 2c_8\left(\frac{s}{6c_9 2^s}\right)^{-\frac{\beta\lambda}{s}}\left(\frac{\log M}{\log (\log M)}\right)^{-\frac{\beta\lambda}{s}}.
	\end{equation*}
	Again from \eqref{zhou_revise}, we know $ \log M <c_9(m+1)^s\log (3(m+1))\le 3c_9(m+1)^{s+1}$, which implies $\log (3m)>\log(m+1)>\frac{\log (\log M/(3c_9))}{s+1}\ge \frac{\log (\sqrt{\log M})}{s+1}=\frac{\log \log (M)}{2(s+1)}$ as long as $\log M\ge (3c_9)^2$. It then follows that
	\begin{equation*}
		\log M\ge c_9 m^s\log (3m)>\frac{c_9}{2(s+1)}m^s\log (\log M).
	\end{equation*}
	Then the depth is
	\begin{equation*}
		\begin{aligned}
			J&=(2m+1)^{2s}+(2m+1)^s+1\\
			&\le (3m)^{2s}+(3m)^{s}+1< 3(3m)^{2s}\\
			&\le 3^{2s+1}\frac{4(s+1)^2}{c_9^2}\left(\frac{\log M}{\log (\log M)}\right)^2.
		\end{aligned}
	\end{equation*}
	This proves the desired conclusion when $\log M\ge \max\{2c_9\log 3,(3c_9)^2\}:=c_{10}$. When $\log M<c_{10}$, we take $m=1$ and the direct statement is also seen. This proves Theorem \ref{corro}.
	\end{proof}
	
	\section{Discussion}
	In this paper, we construct a parametric functional deep ReLU network $\Theta_v$ by the piecewise linear interpolation under a simple triangulation. We derive a rate of approximation in Theorem \ref{result1} in terms of the degree of polynomial approximation of $f$ and the total number of nonzero weights in the network $\Theta_v$. Theorem \ref{corro} shows that the proposed parametric functional deep ReLU network can achieve almost the same rate as shallow functional neural networks in \cite{Mhaskar97} with infinitely differentiable activation functions satisfying the assumption \eqref{assump1}. It would be interesting to extend our study to the approximation of nonlinear functionals with structures by structured functional neural networks such as convolutional neural networks \cite{Zhou20,Feng,Mao} and applications in generalization analysis and some practical applications \cite{Feng2022}.
	

	\section{Appendix}
	This appendix contains some lemmas which follow immediately from some known results in the literature. For completeness, we provide proofs.

	\begin{lemma}\label{lemma_partition}
		Let $d\in\mathbb{N}$, we denote by $\triangle_{\mathbf{n},\rho}$ a simplex in $\mathbb{R}^{d}$ defined by
		\begin{equation*}
			\triangle_{\mathbf{n},\rho}=\big\{\mathbf{x}\in\mathbb{R}^d:0\le x_{\rho(1)}-n_{\rho(1)}\le\cdot\cdot\cdot\le x_{\rho(d)}-n_{\rho(d)}\le 1\big\},
		\end{equation*}
		where $\mathbf{n}=(n_1,\cdot\cdot\cdot,n_d)\in\mathbb{Z}^d$ and $\rho$ is a permutation of $d$ elements. Let $\mathcal{P}_d$ be the set of all permutations of $d$ elements, then the set of all simplexes $\left\{\triangle_{\mathbf{n},\rho}\right\}_{\mathbf{n}\in\mathbb{Z}^d,\rho\in\mathcal{P}_d}$ is a partition of $\mathbb{R}^d$.
	\end{lemma}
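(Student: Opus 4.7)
The plan is to verify the two standard requirements for a partition: first, that every point of $\mathbb{R}^d$ lies in at least one simplex $\triangle_{\mathbf{n},\rho}$, and second, that two distinct simplexes can overlap only on a set of Lebesgue measure zero (their interiors are disjoint). This is the classical Coxeter--Freudenthal--Kuhn triangulation, so the proof will be constructive and rely mostly on the floor function together with sorting the fractional parts.

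For the covering step, I would take any $\mathbf{x}=(x_1,\dots,x_d)\in\mathbb{R}^d$ and set $n_i=\lfloor x_i\rfloor$ for $i=1,\dots,d$, so that the fractional parts $y_i:=x_i-n_i$ lie in $[0,1)$. Since $\{y_i\}_{i=1}^d$ is a finite set of real numbers, there exists at least one permutation $\rho\in\mathcal{P}_d$ with $y_{\rho(1)}\le y_{\rho(2)}\le\cdots\le y_{\rho(d)}$. Writing this chain of inequalities in terms of $x$ and $\mathbf{n}$ gives precisely
\begin{equation*}
0\le x_{\rho(1)}-n_{\rho(1)}\le\cdots\le x_{\rho(d)}-n_{\rho(d)}<1\le 1,
\end{equation*}
so $\mathbf{x}\in\triangle_{\mathbf{n},\rho}$. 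Hence $\bigcup_{\mathbf{n},\rho}\triangle_{\mathbf{n},\rho}=\mathbb{R}^d$.

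For the interior-disjointness step, I would show that if $\mathbf{x}$ lies in the \emph{interior} of some $\triangle_{\mathbf{n},\rho}$ (that is, all of the defining inequalities are strict), then both $\mathbf{n}$ and $\rho$ are determined uniquely by $\mathbf{x}$. Indeed, strictness forces $0<x_{\rho(i)}-n_{\rho(i)}<1$ for every $i$, so $n_{\rho(i)}=\lfloor x_{\rho(i)}\rfloor$, which pins down $\mathbf{n}$; and strictness of the ordering $x_{\rho(1)}-n_{\rho(1)}<\cdots<x_{\rho(d)}-n_{\rho(d)}$ makes the fractional parts pairwise distinct, which forces $\rho$ to be the unique permutation that sorts them. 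Consequently, interiors of distinct simplexes cannot intersect.

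The only subtlety, and the step where I would be most careful, is the treatment of boundary points where some fractional part equals $0$ or two fractional parts coincide: such an $\mathbf{x}$ may lie in several closed simplexes $\triangle_{\mathbf{n},\rho}$ simultaneously, but this causes no issue because the overlaps occur on the codimension-one faces and this is exactly what is meant by a simplicial partition. I would close the argument by stating explicitly that the family $\{\triangle_{\mathbf{n},\rho}\}$ covers $\mathbb{R}^d$ and has pairwise disjoint interiors, which is the sense of ``partition'' required for the piecewise-linear construction used in Section~4.
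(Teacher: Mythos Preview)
Your proposal is correct and follows essentially the same route as the paper: both arguments use the floor function to determine $\mathbf{n}$ and then sort the fractional parts to determine $\rho$. The paper organizes this slightly differently by first restricting to a unit cube $\mathbb{I}_{\mathbf{n}}=\prod_i[n_i,n_i+1]$ and showing the simplexes $\{\triangle_{\mathbf{n},\rho}\}_{\rho\in\mathcal{P}_d}$ partition that cube, whereas you treat covering and interior-disjointness globally in one pass; your explicit discussion of the boundary case is in fact more careful than the paper's, which simply asserts $\triangle_{\mathbf{n},\rho_1}\cap\triangle_{\mathbf{n},\rho_2}=\emptyset$ without commenting on shared faces.
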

	\begin{proof}
		We denote $\mathbb{I}_\mathbf{n}=[n_1,1+n_1]\times\cdots\times[n_d,1+n_d]$, which is a unit cube in $\mathbb{R}^d$ with $\mathbf{n}$ being one of its vertices. First, it is obvious that
		\begin{equation*}
			\triangle_{\mathbf{n},\rho_1}\cap\triangle_{\mathbf{n},\rho_2}=\emptyset, \quad \forall\rho_1\neq\rho_2,
		\end{equation*}
		and note that for any $\mathbf{x}\in\mathbb{I}_\mathbf{n}$, there exists a permutation $\rho$ such that $\{x_1-n_1,...,x_d-n_d\}$ can be ordered in the following way
		\begin{equation*}
			x_{\rho(1)}-n_{\rho(1)}\le \cdots\le	x_{\rho(d)}-n_{\rho(d)},
		\end{equation*}
		therefore $\{\triangle_{\mathbf{n},\rho}\}_{\rho\in\mathcal{P}_d}$ is a partition of $\mathbb{I}_\mathbf{n}$. Moreover, since $\{\mathbb{I}_\mathbf{n}\}_{\mathbf{n}\in\mathbb{Z}^d}$ is partition of $\mathbb{R}^d$, we know $\left\{\triangle_{\mathbf{n},\rho}\right\}_{\mathbf{n}\in\mathbb{Z}^d,\rho\in\mathcal{P}_d}$ is a partition of $\mathbb{R}^d$.
	\end{proof}
	
	\begin{lemma}\label{lemma_convex}
		Let $d\in\mathbb{N}$ and denote $S_0$ to be the union of all simplexes defined in Lemma \ref{lemma_partition} that contain $\mathbf{0}\in\mathbb{R}^d$, that is,
		\begin{equation*}
			S_0=\bigcup_{\mathbf{0}\in\triangle_{\mathbf{n},\rho}}\triangle_{\mathbf{n},\rho},
		\end{equation*}
		then $S_0$ is a convex set.
	\end{lemma}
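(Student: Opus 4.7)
The plan is to prove convexity by exhibiting $S_0$ as an explicit intersection of half-spaces. The key observation is that the simplexes containing $\mathbf{0}$ are highly restricted: from the defining inequalities $0\le -n_{\rho(1)}\le\cdots\le -n_{\rho(d)}\le 1$, each $n_i$ must lie in $\{-1,0\}$, and $\rho$ must list the zero-indices before the $(-1)$-indices. I will use this to read off a coordinate description of $S_0$.

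First, for any simplex $\triangle_{\mathbf{n},\rho}$ containing $\mathbf{0}$, set $A=\{i:n_i=0\}$ and $B=\{i:n_i=-1\}$. For $\mathbf{x}\in\triangle_{\mathbf{n},\rho}$, the coordinates with $i\in A$ satisfy $0\le x_i\le 1$ and those with $i\in B$ satisfy $-1\le x_i\le 0$. Moreover, the monotone chain in the definition, restricted to the boundary between $A$-block and $B$-block under $\rho$, forces $\max_{i\in A}x_i\le\min_{i\in B}(x_i+1)$, i.e.\ $\max_i x_i-\min_i x_i\le 1$. Thus
\[
S_0\subseteq C:=\Bigl\{\mathbf{x}\in\mathbb{R}^d : -1\le x_i\le 1\text{ for all }i,\ \max_i x_i-\min_i x_i\le 1\Bigr\}.
\]
Conversely, given $\mathbf{x}\in C$, I define $n_i=0$ if $x_i\ge 0$ and $n_i=-1$ if $x_i<0$, and let $\rho$ list the indices in $A$ first (sorted by $x_i$ ascending) followed by those in $B$ (sorted by $x_i$ ascending). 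Setting $y_i=x_i-n_i\in[0,1]$, the ordering condition $y_{\rho(1)}\le\cdots\le y_{\rho(d)}$ needs only the boundary inequality $\max_{A}x_i\le\min_{B}(x_i+1)$, which is precisely the diameter hypothesis $\max x_i-\min x_i\le 1$. Hence $\mathbf{x}\in\triangle_{\mathbf{n},\rho}\subseteq S_0$, giving the reverse inclusion $C\subseteq S_0$.

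Once $S_0=C$ is established, convexity is immediate: $C$ is the intersection of the half-spaces $\{x_i\le 1\}$, $\{-x_i\le 1\}$, and $\{x_i-x_j\le 1\}$ for $i,j=1,\dots,d$, and any intersection of half-spaces is convex. The main obstacle is bookkeeping in the forward direction: one must carefully track the position of the $A$/$B$ blocks under $\rho$ to convert the simplex's chain of inequalities into the two clean conditions defining $C$, and handle the degenerate cases $A=\emptyset$ or $B=\emptyset$ separately (both reduce to $\mathbf{x}\in[-1,0]^d$ or $[0,1]^d$, which trivially satisfies the diameter bound).
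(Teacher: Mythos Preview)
Your proposal is correct and follows essentially the same strategy as the paper: both identify $S_0$ with the polytope $\{\mathbf{x}:|x_i|\le 1,\ x_i-x_j\le 1\text{ for all }i,j\}$ (your set $C$ coincides with the paper's set $S'$), establish the two inclusions, and deduce convexity from the half-space description. Your reverse inclusion is in fact slightly more explicit than the paper's about constructing the permutation $\rho$ so that $\mathbf{0}$ lies in the resulting simplex.
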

	\begin{proof}
		Denote
		\begin{equation*}
			S'=\bigcap_{\substack{k,l=1,...,d\\ l\neq k}} \left\{\mathbf{x}\in\mathbb{R}^d:1+x_k\ge0,1-x_k\ge0,1+x_l\ge x_k\right\}.
		\end{equation*}
		Notice that $S'$ is a convex set since the intersection of half-spaces is convex, it suffices to show $S_0=S'$ in the rest of the proof.
		
		Firstly, we show that $S_0\subset S'$. For any simplex $\triangle_{\mathbf{n},\rho}$ that contains $\mathbf{0}$, we have
		\begin{equation*}
			0\le -n_{\rho(1)}\le\cdot\cdot\cdot\le -n_{\rho(d)}\le 1,
		\end{equation*}
		which implies $\mathbf{n}\in\{-1,0\}^d$ and if $n_{\rho(j_0)}=-1$ for some $j_0$, then $n_{\rho(j)}=-1$ for $j>j_0$. 
		\begin{itemize}
			\item When $\mathbf{n}=\mathbf{0}$, we have $\triangle_{\mathbf{n},\rho}=\big\{\mathbf{x}\in\mathbb{R}^d:0\le x_{\rho(1)}\le\cdot\cdot\cdot\le x_{\rho(d)}\le 1\big\}$. For any  $\mathbf{x}\in\triangle_{\mathbf{n},\rho}$, we have  $|x_k|\le 1$ for all $k$ and $x_k\le 1+x_l$ for any $k\neq l$, which implies $\mathbf{x}\in S'$, that is, $\triangle_{\mathbf{n},\rho}\subset S'$;
			\item When $\mathbf{n}\neq \mathbf{0}$, we denote $j_*$ to be smallest integer such that $n_{\rho(j_*)}=-1$. For any $\mathbf{x}\in\triangle_{\mathbf{n},\rho}$, we have 
			\begin{equation*}
				0\le x_{\rho(1)}\le\cdots\le x_{\rho(j_*-1)}\le x_{\rho(j_*)}+1\le\cdots\le x_{\rho(d)}+1\le 1,
			\end{equation*}
			which implies $|x_k|\le 1$ for all $k$ and $x_k\le 1+x_l$ for any $k\neq l$, that is $\triangle_{\mathbf{n},\rho}\subset S'$.
		\end{itemize}
		Therefore we have $\triangle_{\mathbf{n},\rho}\subset S'$ for any $\triangle_{\mathbf{n},\rho}$ containing $\mathbf{0}$, which implies $S_0\subset S'$. 
		
		Secondly, we show that $S'\subset S_0$. Let $\mathbf{x}\in S'$, then $x_k\in [-1,1]$ for $k=1,...,d$. We define $\mathbf{n}=(n_1,...,n_d)$ by 
		\begin{equation*}
			n_k=\left\{
			\begin{aligned}
				0,&\quad \text{if } x_k\in [0,1], \\
				-1,&\quad \text{if } x_k\in [-1,0),
			\end{aligned}
			\right.\quad k=1,...,d.
		\end{equation*} 
		Now we have $0\le x_k-n_k\le 1$ for all $k$. Moreover, the elements in the $\{x_1-n_1,\cdots,x_d-n_d\}$ can be order as below
		\begin{equation*}
			0\le x_{\rho(1)}-n_{\rho(1)}\le\cdots\le x_{\rho(d)}-n_{\rho(d)}\le 1,
		\end{equation*}
		with some permutation $\rho$, which means $\mathbf{x}\in \triangle_{\mathbf{n},\rho}$. Besides, we know $\mathbf{0}\in\triangle_{\mathbf{n},\rho}$ from the definition of $\mathbf{n}$, therefore $S'\subset S_0$, and the proof is completed.
	\end{proof}
	
	\begin{lemma}\label{lemma_affine}
		Let $d\in\mathbb{N}$, $\triangle_{\mathbf{n},\rho}$ be a simplex defined in Lemma \ref{lemma_partition} having $\mathbf{0}\in\mathbb{R}^d$ as a vertex, and denote $U$ to be the set of all vertices of $\triangle_{\mathbf{n},\rho}$. If
		$h:\mathbb{R}^d\rightarrow\mathbb{R}$ is a linear function satisfying that $h(\mathbf{0})=1$, and $h(\mathbf{x})=0$ for $\mathbf{x}\in U\setminus\{\mathbf{0}\}$, then
		\begin{equation*}
			h\in\left\{\mathbf{x}\mapsto 1+ax_l-bx_k :(a,b)\in\{(1,0),(1,1),(-1,0)\},k,l\in\{1,...,d\},k\neq l\right\}.
		\end{equation*}
	\end{lemma}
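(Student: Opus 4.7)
The plan is to enumerate the $d+1$ vertices of $\triangle_{\mathbf{n},\rho}$ along its canonical chain, to use the shifted affine structure of $h$ together with the interpolation conditions at those vertices to solve for the coefficients, and finally to read off the three possible forms by separating the two boundary cases from the generic one.

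First, I would observe that the simplex $\triangle_{\mathbf{n},\rho}$ is the set of $\mathbf{x}\in\mathbb{R}^d$ satisfying the chain $0\le x_{\rho(1)}-n_{\rho(1)}\le\cdots\le x_{\rho(d)}-n_{\rho(d)}\le 1$. Saturating the chain at the $k$-th position for $k=0,1,\dots,d$ gives the $d+1$ vertices
\begin{equation*}
v^{(k)} = \mathbf{n} + \sum_{j=k+1}^{d} e_{\rho(j)}, \qquad k=0,1,\dots,d,
\end{equation*}
where $e_i$ is the $i$-th standard basis vector. The key structural observation is that consecutive vertices differ in a single coordinate:
\begin{equation*}
v^{(k-1)} - v^{(k)} = e_{\rho(k)}, \qquad k=1,\dots,d.
\end{equation*}

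Since $\mathbf{0}\in U$, there is a unique $k_0\in\{0,1,\dots,d\}$ with $v^{(k_0)}=\mathbf{0}$; this forces $\mathbf{n}$ to have entry $-1$ at positions $\rho(k_0+1),\dots,\rho(d)$ and $0$ at the remaining positions. Because $h$ is affine with $h(\mathbf{0})=1$, write $h(\mathbf{x}) = 1 + \sum_{i=1}^{d}c_i x_i$. Then for every $k=1,\dots,d$,
\begin{equation*}
h(v^{(k-1)}) - h(v^{(k)}) = c_{\rho(k)}.
\end{equation*}
The hypothesis $h(v)=0$ for every vertex $v\neq\mathbf{0}$ combined with $h(v^{(k_0)})=1$ then yields $c_{\rho(k)}=0$ for $k\notin\{k_0,k_0+1\}$, while $c_{\rho(k_0)}=-1$ (when $k_0\ge 1$) and $c_{\rho(k_0+1)}=+1$ (when $k_0\le d-1$).

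Putting this together gives three cases indexed by $k_0$: if $k_0=0$ then $h(\mathbf{x}) = 1 + x_{\rho(1)}$, corresponding to $(a,b)=(1,0)$; if $k_0=d$ then $h(\mathbf{x}) = 1 - x_{\rho(d)}$, corresponding to $(a,b)=(-1,0)$; and if $0<k_0<d$ then $h(\mathbf{x}) = 1 + x_{\rho(k_0+1)} - x_{\rho(k_0)}$ with $\rho(k_0+1)\neq\rho(k_0)$, corresponding to $(a,b)=(1,1)$. This exhausts the stated list. The main bookkeeping obstacle will be keeping the indexing of the permutation aligned with the chain of inequalities and correctly handling the two degenerate cases $k_0\in\{0,d\}$, where one of the two adjacent neighbors of $\mathbf{0}$ in the simplex is missing; once the vertex enumeration above is written down carefully, the rest is a one-line linear algebra deduction.
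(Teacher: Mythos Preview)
Your proposal is correct and follows essentially the same approach as the paper: enumerate the $d+1$ vertices of $\triangle_{\mathbf{n},\rho}$ along the chain, write $h$ in affine form, and use the interpolation conditions $h(u_i)=\delta_{i,k_0}$ to solve for the coefficients, then read off the three cases $k_0=0$, $k_0=d$, $0<k_0<d$. Your telescoping observation $h(v^{(k-1)})-h(v^{(k)})=c_{\rho(k)}$ is a clean way to carry out what the paper leaves as ``directly solving the linear system,'' so your write-up is in fact slightly more explicit than the paper's own proof.
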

	\begin{proof}
		Note that the simplex $\triangle_{\mathbf{n},\rho}$ can be viewed as the intersection of $d+1$ closed half-spaces,
		\begin{equation*}
			\begin{aligned}
				\triangle_{\mathbf{n},\rho}=&\cap_{j=1}^{d-1}\{\mathbf{x}:x_{\rho(j)}-n_{\rho(j)}\le x_{\rho(j+1)}-n_{\rho(j+1)}\}\cap\{\mathbf{x}:n_{\rho(1)}\le x_{\rho(1)}\}\\
				&\cap\{\mathbf{x}:x_{\rho(d)}\le n_{\rho(d)}+1\}.
			\end{aligned}
		\end{equation*} 
		If we denote $\rho(\mathbf{y})=(y_{\rho(1)},...,y_{\rho(d)})$ for any vector $\mathbf{y}\in\mathbb{R}^d$, then the set of all vertices $U$ can be represented by
		\begin{equation*}
			U=\left\{u_i\in\mathbb{R}^d:\rho(u_i)=\rho(\mathbf{n})+e_i:i=0,1,...,d\right\},
		\end{equation*}
		where $e_i$ is the vector such that the last $i$ entries are $1$, and the rest of entries are $0$. Since $h$ is linear, we can write $h$ as the following form
		\begin{equation*}
			h(\mathbf{x})=\langle \mathbf{a},\rho(\mathbf{x})-\rho(\mathbf{n})\rangle+a_0,
		\end{equation*}
		for some $\mathbf{a}=(a_1,...,a_d)\in\mathbb{R}^d$, $a_0\in\mathbb{R}$. Since $\mathbf{0}$ is in $U$, there must exist $k\in \{0,...,d\}$ such that $\rho(\mathbf{n})+e_k=\mathbf{0}$. The proof is completed by directly solving the linear system with $d+1$ variables $(a_0,...,a_d)$, 
		\begin{equation*}
			h(u_i)=\langle \mathbf{a},e_i\rangle+a_0=\delta_{ik},j=0,...,d,
		\end{equation*}
		where $\delta_{ik}=1$ if $i=k$, and $\delta_{ik}=0$ if $i\neq k$.
	\end{proof}
	
	\section*{Acknowledgments} The second author is supported partially by the Research Grants Council of Hong Kong [Project No. HKBU 12302819] and National Natural Science Foundation of China [Project No. 11801478]. The third author is supported partially by National Natural Science Foundation of China  [Project No. 11971048] and NSAF [Project No. U1830107]. The last author was supported partially by the Research Grants Council of Hong Kong [Project Numbers CityU 11308121, N\_CityU102/20, C1013-21GF], Laboratory for AI-powered Financial Technologies, Hong Kong Institute for Data Science, and National Natural Science Foundation of China [Project No. 12061160462]. This paper in its first version was written when the last author worked at City University of Hong Kong and visited SAMSI/Duke during his sabbatical leave. He would like to express his gratitude to their hospitality and financial support. The corresponding author is Jun Fan.


\begin{thebibliography}{}
		\bibitem{Barron93}
		Barron, A.R.: Universal approximation bounds for superpositions of a sigmoidal function, IEEE Trans. Inform. Theory, 39, 930-945 (1993)
		
		\bibitem{Chen2022}
		Chen X. M., Tang B. H., Fan J., Guo X.: Online gradient descent algorithms for functional data learning, J. Complex., 70, 101635 (2022)
		
		\bibitem{Chui96}
		Chui, C.K., Li, X., Mhaskar, H.N.: Limitations of the approximation capabilities of neural networks with one hidden layer, Adv. Comput. Math, 5, 233-243 (1996)
		
		
		\bibitem{Cybenko}
		Cybenko, G.: Approximations by superpositions of sigmoidal functions, Math. Control, Signals, and Systems, 2, 303-314 (1989)
		
		\bibitem{Feng}
		Feng, H., Huang, S., Zhou, D. X.: Generalization analysis of CNNs for classification on spheres, IEEE Trans. Neural Netw. Learn. Syst., https://doi.org/10.1109/TNNLS.2021.3134675
		
		\bibitem{Feng2022}
		Feng, H., Hou, S. Z., Wei, L. Y., Zhou, D. X.: CNN models for readability of Chinese texts, Math. Found. Comput., 5, 351--362 (2022)	
		
		\bibitem{He2020}
		He, J., Li, L., Xu, J., Zheng, C.: ReLU Deep neural networks and linear finite elements, J. Comput. Math., 
			38(3), 502-527 (2020)
		
		
		\bibitem{Hornik89}
		Hornik, K., Stinchcombe, M., and White, H.: Multilayer feedforward networks are universal approximators, Neural Netw., 2, 359-366 (1989)
		
		\bibitem{Klusowski18}
		Klusowski, J., Barron, A.: Approximation by combinations of ReLU and squared
		ReLU ridge functions with $l^1$ and $l^0$ controls, IEEE Trans. Inform. Theory, 64, 7649-7656 (2018)
		
		\bibitem{Leshno93}
		Leshno, M., Lin, Y.V., Pinkus, A., Schocken, S.: Multilayer feedforward networks with a non-polynomial activation function can approximate any function, Neural Netw., 6, 861-867 (1993)
		
		
		\bibitem{Lorentz53}
		Lorentz, G.G.: Bernstein Polynomials, University of Toronto Press (1953)
		
		\bibitem{Lorentz66}
		Lorentz, G.G.: Approximation of Functions, Holt, Rinehart and Wiston (1966)
		
		\bibitem{Mao}
		Mao, T., Shi, Z.-J., Zhou, D.-X.: Approximating functions with multi-features by deep convolutional neural networks,  Anal. Appl., https://doi.org/10.1142/S0219530522400085
		
		\bibitem{Mhaskar93}
		Mhaskar, H.N.: Approximation properties of a multilayered feedforward artificial neural network, Adv. Comput. Math., 1, 61-80 (1993)
		
		\bibitem{Mhaskar96}
		Mhaskar, H.N.: Neural networks for optimal approximation of smooth and analytic functions, Neural Comput., 8, 164-177 (1996)
		
		\bibitem{Mhaskar97}
		Mhaskar, H.N., Hahm, N.: Neural Networks for Functional Approximation and System
		Identification, Neural Comput., 9, 143-159 (1997)
		
		
		\bibitem{Petersen18}
		Petersen, P., Voigtlaender, V.: Optimal approximation of piecewise smooth functions using deep ReLU neural networks, Neural Netw., 108, 296-330 (2018)
		
		\bibitem{Ramsay97}
		Ramsay, J., Silverman, B.:  Functional data Analysis, Springer Series in Statistics. Springer Verlag (1997)
		
		\bibitem{Rossi05}
		Rossi, F., Conan-Guez, B.:  Functional multi-layer perceptron: a nonlinear tool for functional data analysis, Neural Netw., 18, 45-60 (2005)
		
		\bibitem{Shaham18}
		Shaham, U., Cloninger, A., Coifman R.: Provable approximation properties for deep neural networks, Appl. Comput. Harmon. Anal., 44, 537-557 (2018)
		
		\bibitem{Stinchcombe99}
		Stinchcombe M.B.: Neural network approximation of continuous functionals and continuous functions on compactifications, Neural Netw., 12, 467-477 (1999)
		
		\bibitem{Telgarsky16}
		Telgarsky, M.: Benefits of depth in neural networks, 29th Annual Conference on Learning Theory PMLR, 49, 1517-1539 (2016)
		
		\bibitem{Timan63}
		Timan, A.F.: Theory of Approximation of Functions of a RealVariable, Macmillan Co. (1963)
		
		
		\bibitem{Yarotsky17}
		Yarotsky, D.: Error bounds for approxiations with deep ReLU networks, Neural Netw., 94, 103-114 (2017)
		
		\bibitem{Yarotsky18}
		Yarotsky, D.: Optimal approximation of continuous functions by very deep ReLU networks, Conference on Learning Theory (2018)
		
		\bibitem{Zhou18}
		Zhou, D. X.: Deep distributed convolutional neural networks: universality, Anal. Appl., 16, 895-919 (2018)
		
		\bibitem{Zhou20}
		Zhou, D. X.: Universality of deep convolutional neural networks, Appl. Comput. Harmonic Anal., 48, 787-794 (2020)
		
	\end{thebibliography}
\end{document}